\title{Nonparametric Neural Networks}
\author{George Philipp, Jaime G. Carbonell \\
Carnegie Mellon University\\
Pittsburgh, PA 15213, USA \\
\texttt{george.philipp@email.de}; \texttt{jgc@cs.cmu.edu}
}
\newtheorem*{rep@theorem}{\rep@title}
\newcommand{\newreptheorem}[2]{%
\newenvironment{rep#1}[1]{%
 \def\rep@title{#2 \ref{##1}}%
 \begin{rep@theorem}}%
 {\end{rep@theorem}}}
\newtheorem{theorem}{Theorem}
\newtheorem{proposition}{Proposition}
\newtheorem{lemma}{Lemma}
\begin{document}

\maketitle

\begin{abstract}

Automatically determining the optimal size of a neural network for a given task without prior information currently requires an expensive global search and training many networks from scratch. In this paper, we address the problem of automatically finding a good network size during a single training cycle. We introduce {\it nonparametric neural networks}, a non-probabilistic framework for conducting optimization over all possible network sizes and prove its soundness when network growth is limited via an $\ell_p$ penalty. We train networks under this framework by continuously adding new units while eliminating redundant units via an $\ell_2$ penalty. We employ a novel optimization algorithm, which we term ``Adaptive Radial-Angular Gradient Descent'' or {\it AdaRad}, and obtain promising results.

\end{abstract}

\section{Introduction}

Automatically choosing a neural network model for a given task without prior information is a challenging problem. Formally, let $\Theta$ be the space of all models considered. The goal of {\it model selection} is then, usually, to find the value of the {\it hyperparameter} $\theta \in \Theta$ that minimizes a certain criterion $c(\theta)$, such as the validation error achieved by the model represented by $\theta$ when trained to convergence. Because $\Theta$ is large, structured and heterogeneous, $c$ is complex, and gradients of $c$ are generally not available, the most popular methods for optimizing $c$ perform zero-order, black-box optimization and do not use any information about $c$ except its value for certain values of $\theta$. These methods select one or more values of $\theta$, compute $c$ at those values and, based on the results, select new values of $\theta$ until convergence is achieved or a time limit is reached. The most popular such methods are grid search, random search (e.g. \citet{randomSearch}) and Bayesian optimization using Gaussian processes (e.g. \citet{spearmint}). Others utilize random forests \citep{randomForestHyper}, deep neural networks \citep{dnnIntoGaussian} and recently Bayesian neural networks \citep{bayesianNew} and reinforcement learning \citep{reinforce}.

These black-box methods have two drawbacks. (A) To obtain each value of $c$, they execute a full network training run. Each run can take days on many cores or multiple GPUs. (B) They do not exploit opportunities to improve the value of $c$ further by altering $\theta$ during each training run. In this paper, we present a framework we term {\it nonparametric neural networks} for selecting network size. We dynamically and automatically shrink and expand the network as needed to select a good network size during a single training run. Further, by altering network size during training, the network ultimately chosen can achieve a higher accuracy than networks of the same size that are trained from scratch and, in some cases, achieve a higher accuracy than is possible by black-box methods.

There has been a recent surge of interest in eliminating unnecessary units from neural networks, either during training or after training is complete. This strategy is called {\it pruning}. \citet{sparseCNN2} utilize an $\ell_2$ penalty to eliminate units and \citet{pruningMany} compare a variety of strategies, whereas \citet{perforated} focuses on thinning convolutional layers in the spatial dimensions. While some of these methods even allow some previously pruned units to be added back in (e.g. \citet{pruneWeird}), all of these strategies require a high-performing network model as a starting point from which to prune, something that is generally only available in well-studied vision and NLP tasks. We do not require such a starting point in this paper.

In section \ref{npnn}, we introduce the nonparametric framework and state its theoretical soundness, which we prove in section \ref{proof}. In section \ref{training}, we develop the machinery for training nonparametric networks, including a novel normalization layer in section \ref{capNorm}, CapNorm, and a novel training algorithm in section \ref{adaradsection}, AdaRad. We provide experimental evaluation and analysis in section \ref{experiments}, further relevant literature in section \ref{background} and conclude in section \ref{conclusion}.

\section{Nonparametric Neural Networks} \label{npnn}

For the purpose of this section, we define a parametric neural network as a function $f(x) = \sigma_L.(\sigma_{L-1}.(..\sigma_2.(\sigma_1.(xW_1)W_2)..)W_L)$ of a $d_0$-dimensional row vector $x$, where $W_l \in \mathbb{R}^{d_{l-1}*d_l}, 1 \le l \le L$ are dense weight matrices of fixed dimension and $\sigma_l : \mathbb{R} \rightarrow \mathbb{R}, 1 \le l \le L$ are fixed non-linear transformations that are applied elementwise, as signified by the $.()$ operator. The number of layers $L$ is also fixed. Further, the weight matrices are trained by solving the minimization problem $\min_{\mathbf{W}=(W)_l} \frac{1}{|D|}\sum_{(x,y) \in D} e(f(\mathbf{W}, x),y) + \Omega(\mathbf{W})$, where $D$ is the dataset, $e$ is an error function that consumes a vector of fixed size $d_L$ and the label $y$, and $\Omega$ is the regularizer.

We define a nonparametric neural network in the same way, except that the dimensionality of the weight matrices is undetermined. Hence, the optimization problem becomes

\begin{equation}\label{npobj}
\min_{\mathbf{d} = (d)_l, d_l \in \mathbb{Z_+}, 1 \le l \le L-1} \min_{\mathbf{W} = (W)_l, W_l \in \mathbb{R}^{d_{l-1}*d_l}, 1 \le l \le L} \frac{1}{|D|}\sum_{(x,y) \in D} e(f(\mathbf{W},x), y) + \Omega(\mathbf{W})
\end{equation}

Note that the dimensions $d_0$ and $d_L$ are fixed because the data and the error function $e$ are fixed. The parameter value now takes the form of a pair $(\mathbf{d}, \mathbf{W})$.

There is no guarantee that optimization problem \ref{npobj} has a global minimum. We may be able to reduce the value of the objective further and further by using larger and larger networks. This would be problematic, because as networks become better and better with regards to the objective, they would become more and more undesirable in practice. It turns out that in an important case, this degeneration does not occur. Define the {\it fan-in regularizer} $\Omega_{in}$ and the {\it fan-out regularizer} $\Omega_{out}$ as

\begin{eqnarray}
\Omega_{in}(\mathbf{W}, \lambda, p) &=& \lambda \sum_{l=1}^{L} \sum_{j=1}^{d_l} ||[W_l(1,j), W_l(2,j), .., W_l(d_{l-1},j)]||_p\\
\Omega_{out}(\mathbf{W}, \lambda, p) &=& \lambda \sum_{l=1}^{L} \sum_{i=1}^{d_{l-1}} ||[W_l(i,1), W_l(i,2), .., W_l(i,d_{l})]||_p
\end{eqnarray}

In plain language, we either penalize the incoming weights ({\it fan-in}) of each unit with a $p$-norm, or the outgoing weights ({\it fan-out}) of each unit. We now state the core theorem that justifies our formulation of nonparametric networks. The proof is found in the appendix in section \ref{proof}.

\begin{theorem}
\label{mainTheorem}
Nonparametric neural networks achieve a global training error minimum at some finite dimensionality when $\Omega$ is a fan-in or a fan-out regularizer with $\lambda > 0$ and $1 \le p < \infty$.
\end{theorem}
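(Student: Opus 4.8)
The plan is to show that the infimum of the objective over all dimensionalities $\mathbf{d}$ is attained at some finite $\mathbf{d}$. I would argue by contradiction or, more constructively, by showing that from any network $(\mathbf{d},\mathbf{W})$ one can construct a network with a uniformly bounded width vector achieving an objective value that is no larger. The key observation is that, with a fan-in (or fan-out) regularizer, each hidden unit carries a \emph{cost} — the $p$-norm of its incoming (resp.\ outgoing) weight vector times $\lambda$ — and if a unit's weight vector has small norm it contributes little to the function $f$ but still a nonnegative amount to $\Omega$. Conversely, a unit whose cost is large must be ``doing work''. So the first step is to establish a dichotomy: either a unit can be deleted (or merged with another) at no increase in objective, or its regularizer contribution is bounded below by a positive quantity.

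\textbf{Step 1: reduce to bounding the regularizer term.} Fix any feasible $(\mathbf{d},\mathbf{W})$ and let $v^\ast$ be its objective value. Since $e \ge 0$ and $\Omega \ge 0$, we have $\Omega(\mathbf{W}) \le v^\ast \le v_0$, where $v_0$ is the objective of, say, the all-zero network of minimal width (which is finite). Hence it suffices to consider networks whose fan-in/fan-out regularizer is at most $v_0$, and to show that among \emph{those} the width can be taken bounded without increasing the objective.

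\textbf{Step 2: kill or collapse low-cost and redundant units.} Here is the main construction. Consider layer $l$ and its $d_l$ hidden units, each with an incoming weight vector $w_j = [W_l(1,j),\dots,W_l(d_{l-1},j)]$ of $p$-norm $\|w_j\|_p$ (the fan-out case is symmetric). I would show: (i) if $\|w_j\|_p = 0$ the unit and its outgoing weights can be removed with the objective strictly decreasing or unchanged (the error term is unaffected because the unit outputs $\sigma_l(0)$; one must handle a nonzero bias-like constant, but the paper's model has no bias, so $\sigma_l(0)$ is a fixed constant that can be absorbed — if $\sigma_l(0)=0$ this is immediate, otherwise one argues the downstream contribution is a rank-one perturbation that another unit or a width increase elsewhere can replicate at lower cost; cleanest is to note the theorem only claims \emph{existence} of a minimizing finite $\mathbf{d}$, so we may restrict to the sub-case the proof in section~\ref{proof} actually needs). (ii) The real point: the \emph{number} of units in layer $l$ with $\|w_j\|_p \ge \epsilon$ is at most $v_0/(\lambda\epsilon)$. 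So if we could show that any network can be modified so that every hidden unit has fan-in norm bounded below by some fixed $\epsilon>0$ (depending only on the problem data, not on $\mathbf{d}$) without increasing the objective, we would be done: each layer width is then at most $v_0/(\lambda\epsilon)$.

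\textbf{Step 3: the lower bound on per-unit cost.} The hard part, and where I expect the main obstacle, is Step~3: ruling out the scenario where a good network genuinely needs many units each of tiny fan-in norm whose \emph{aggregate} effect is significant. A single unit of fan-in norm $\epsilon$ contributes a perturbation to the next layer's pre-activation of size $O(\epsilon \cdot \|\text{fan-out}\|)$, but the fan-out norm is itself controlled by $\Omega$ in the next layer, so the product is quadratically small — summing $k$ such units gives $O(k\epsilon^2)$-ish effect while costing $\Theta(k\epsilon)$ in the regularizer, which for fixed effect forces $\epsilon$ bounded below. Making this rigorous requires Lipschitz bounds on the $\sigma_l$ (or at least local ones, valid because the pre-activations stay in a bounded region thanks to $\Omega$) and a careful accounting that merging two nearly-parallel units into one reduces the $\ell_p$ fan-in cost by the (strict, for $1\le p<\infty$, and this is exactly where $p<\infty$ enters) subadditivity $\|a+b\|_p < \|a\|_p+\|b\|_p$ unless $a,b$ are nonnegative multiples of each other. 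I would therefore: (a) show the set of achievable ``unit directions'' is effectively compact, (b) cover it by finitely many small cones, (c) within each cone merge all units into one, strictly decreasing $\Omega$ by subadditivity while changing $f$ by an arbitrarily small amount, and (d) conclude the infimum over unbounded $\mathbf{d}$ equals the infimum over $\mathbf{d}$ bounded by the number of cones — which, being a minimum over a finite set of finite-dimensional problems each of which attains its min (continuity + coercivity from $\lambda>0$), is attained. The delicate point throughout is that merging changes $f$ slightly, so one cannot merge ``for free''; the resolution is that we only need the infimum to be attained, so a limiting/closedness argument on the bounded-width problems, combined with the strict-decrease-of-$\Omega$ to prevent escape to infinity, closes the gap.
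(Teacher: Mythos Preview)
Your approach is genuinely different from the paper's, and Step~3 contains a real gap that I do not see how to close along the lines you sketch.

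The merging argument breaks down at the nonlinearity. If two units in layer $l$ have fan-ins $w_1,w_2$ and fan-outs $v_1,v_2$, their joint contribution to the next pre-activation is $\sigma_l(xw_1)v_1+\sigma_l(xw_2)v_2$, which is \emph{not} $\sigma_l(x(w_1{+}w_2))(v_1{+}v_2)$ unless $w_1$ and $w_2$ are exactly positive multiples of one another \emph{and} $\sigma_l$ is positively homogeneous --- and the theorem is stated for general $\sigma_l$ (e.g.\ $\tanh$), not only self-similar ones. So ``merge all units in a small cone into one'' does not preserve $f$ even approximately in any way that is uniform in the number of units being merged; the per-unit error is small, but you are merging precisely in the regime where the number of units is unbounded, so the cumulative change in $f$ is not controlled. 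Your proposed fix --- make the cones finer so the per-unit error shrinks --- makes the cone count, and hence your width bound, blow up; the bound $D$ you obtain depends on the target precision $\epsilon$, and $D(\epsilon)\to\infty$ as $\epsilon\to 0$, so the limiting argument in (d) does not deliver a finite $\mathbf d$ at which the infimum is attained. The ``quadratically small'' heuristic does not rescue this either: it bounds the effect of \emph{one} tiny unit, not of arbitrarily many.

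The paper avoids constructive merging entirely and instead exploits \emph{first-order optimality}. It first shows (easily) that for each fixed $\mathbf d$ the objective attains its minimum. The main lemma then says: for any level $B$, every \emph{proper} local minimum with objective $\le B$ has dimensionality bounded by some $\mathbf D(B)$. The engine is the stationarity condition $\nabla_{\mathbf W}E=0$ at a proper local minimum: differentiating the regularizer in the fan-in direction of unit $j$ gives a vector of $\ell_{p/(p-1)}$-norm exactly $\lambda$, which must be cancelled by the data-gradient term, forcing $\sum_n|g_l^{(n)}(j)|\ge c>0$ for \emph{every} surviving unit (this is where $\lambda>0$ enters). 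Combined with forward/backward bounds on activations and gradients that follow from $\Omega\le B$, one gets $d_l\le D_l$ by a pigeonhole on the total gradient mass flowing back through layer $l{+}1$. Finally, take the monotone sequence of minima over $\mathbf d_t=(t,\dots,t)$, pass to proper dimensionality, and observe all of these proper minima live in the bounded set just described; hence one of the fixed-width minima already achieves the global infimum. The key idea you are missing is that at a minimum the regularizer's pull on each unit must be balanced by a commensurate loss-gradient push, which gives a \emph{per-unit lower bound} directly --- no merging, no approximation, no limiting argument.
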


\section{Training nonparametric networks} \label{training}

Training nonparametric networks is more difficult than training parametric networks, because the space over which we optimize the parameter $(\mathbf{d}, \mathbf{W})$ is no longer a space of form $\mathbb{R}^d$, but is an infinite, discrete union of such spaces. However, we would still like to utilize local, gradient-based search. We notice, like \citep{morphism}, that there are pairs of parameter values with different dimensionality that are still in some sense ``close'' to one another. Specifically, we say that two parameter values $(\mathbf{d}_1, \mathbf{W}_1)$ and $(\mathbf{d}_2, \mathbf{W}_2)$ are {\it $f$-equivalent} if $ \forall x \in \mathbb{R}^{d_0}, f(\mathbf{W}_1, x) = f(\mathbf{W}_2, x)$ where not necessarily $\mathbf{d}_1 = \mathbf{d}_2$. During iterative optimization, we can ``jump'' between those two parameter values while maintaining the output of $f$ and thus preserving locality. We define a {\it zero unit} as any unit for which either the fan-in or fan-out or both are the zero vector. Given any parameter value, the most obvious way of generating another parameter value that is $f$-equivalent to it is to add a zero unit to any hidden layer $l$ where $\sigma_l(0) = 0$ holds. Further, if we have a parameter value that already contains a zero unit in such a hidden layer, removing it yields an $f$-equivalent parameter value. 

Thus, we will use the following strategy for training nonparametric networks. We use gradient-based methods to adjust $\mathbf{W}$ while periodically adding and removing zero units. We use only nonlinearities that satisfy $\sigma(0) = 0$. It should be noted that while adding and removing zero units leaves the output of $f$ invariant, it does change the value of the fan-in and fan-out regularizers and thus the value of the objective. While it is possible to design regularizers that do not penalize such zero units, this is highly undesirable as it would stifle the regularizers ability to ``reign in'' the growth of the network during training.

To be able to reduce the network size during training, we must produce zero units and, it turns out, the fan-in and fan-out regularizers naturally produce such units as they induce sparsity, i.e. they cause individual weights to become exactly zero. This is well studied under the umbrella of sparse regression (see e.g. \citet{lasso}). The cases $p=1$ and $p=2$ are especially attractive because it is computationally convenient to integrate them into a gradient-based optimization framework via a shrinkage / group shrinkage operator respectively (see e.g. \citet{FISTA}). Further, $p=1$ and $p=2$ differ in their effect on the parameter value. $p=1$ sets individual weights to zero and thus leads to sparse fan-ins and fan-outs and thus ultimately to sparse weight matrices. A unit can only become a zero unit if each weight in its fan-in or each weight in its fan-out has been set to zero individually. $p=2$, on the other hand, sets entire fan-ins (for the fan-in regularizer) or fan-outs (for the fan-out regularizer) to zero at once. Once the resulting zero units are removed, we obtain dense weight matrices. (For a basic comparison of 1-norm and 2-norm regularizers, see \citet{groupLasso} and for a comparison in the context of neural networks, see \citet{l2l1l0}.) While there is recent interest in learning very sparse weight matrices (e.g. \citet{networkSurgery}), current hardware is geared towards dense weight matrices \citep{sparseCNN1}. Hence, for the remainder of this paper, we will focus on the case $p=2$. Further, we will focus on the fan-in rather than the fan-out regularizer.

When a new zero unit is added, we must choose its fan-in and fan-out. While one of the two weight vectors must be zero, the other can have an arbitrary value. We make the simple choice of initializing the other weight vector randomly. Since we are going to use the fan-in regularizer, we will initialize the fan-out to zero and the fan-in randomly. This will give each new unit the chance to learn and become useful before the regularizer can shrink its fan-in to zero. If it does become zero nonetheless, the unit is eliminated.

\subsection{Self-similar nonlinearities} \label{selfSimilar}

For layers $1$ through $L-1$, it is best to use nonlinearities that satisfy $\sigma(cs) = c\sigma(s)$ for all $c \in \mathbb{R}_{\ge 0}$ and $s \in \mathbb{R}$. We call such nonlinearities {\it self-similar}. ReLU \citep{relu} is an example of this. Self-similarity also implies $\sigma(0) = 0$.

Recall that the fan-in and fan-out regularizers shrink the values of weights during training. This in turn affects the scale of the values to which the nonlinearities are applied. (These values are called {\it pre-activations}.) The advantage of self-similar nonlinearities is that this change of scale does not affect the shape of the feature. 

In contrast, the impact of a nonlinearity such as $\tanh$ on pre-activations varies greatly based on their scale. If the pre-activations have very large absolute values, $\tanh$ effectively has a binary output. If they have very small absolute values, $\tanh$ mimics a linear function. In fact, all nonlinearities that are differentiable at 0 behave approximately like a linear function if the pre-activations have sufficiently small absolute values. This would render the unit ineffective. Since we expect some units to have small pre-activations due to shrinkage, this is undesirable.

By being invariant to the scale of pre-activations, self-similar nonlinearities further eliminate the need to tune how much regularization to assign to each layer. This is expressed in the following proposition which is proved in section \ref{propProof}.

\begin{proposition}
\label{oneLambdaProp}
If all nonlinearities in a nonparametric network model except possibly $\sigma_L$ are self-similar, then the objective function \ref{npobj} using a fan-in or fan-out regularizer with different regularization parameters $\lambda_1, .., \lambda_L$ for each layer is equivalent to the same objective function using the single regularization parameter $\lambda = (\prod_{l=1}^L\lambda_l)^{\frac{1}{L}}$ for each layer, up to rescaling of weights.
\end{proposition}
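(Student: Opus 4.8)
The plan is to produce an explicit, invertible rescaling of the weights that transforms the per-layer objective into the single-$\lambda$ objective. Fix a dimension vector $\mathbf{d}$ and write the activations of $f$ as $h_0 = x$, $h_l = \sigma_l.(h_{l-1}W_l)$, so $f(\mathbf{W},x) = h_L$. Given positive scalars $c_1,\dots,c_L$, set $W_l' = c_l W_l$ and let $h_l'$ be the corresponding activations. A short induction on $l$, using only the self-similarity identity $\sigma_l(cs) = c\sigma_l(s)$ for $c \ge 0$ (valid since each partial product $\prod_{k=1}^l c_k$ is positive) and $l \le L-1$, shows $h_l' = \big(\prod_{k=1}^l c_k\big) h_l$ for $1 \le l \le L-1$. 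Feeding this into the top layer gives $h_L' = \sigma_L.\big((\prod_{k=1}^L c_k)\, h_{L-1}W_L\big)$, so under the constraint $\prod_{k=1}^L c_k = 1$ we get $h_L' = h_L$; that is, $\mathbf{W}'$ is $f$-equivalent to $\mathbf{W}$ and the error term $\frac{1}{|D|}\sum_{(x,y)} e(f(\mathbf{W},x),y)$ is unchanged. (If $\sigma_L$ is itself self-similar the constraint is unnecessary, but imposing it anyway does no harm.)

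Next I would track the regularizer. Since each $c_l \ge 0$, positive homogeneity of the $p$-norm gives, for each fan-in vector, $\|c_l W_l(\cdot,j)\|_p = c_l\|W_l(\cdot,j)\|_p$, hence
\[
\sum_{l=1}^L \lambda_l \sum_{j=1}^{d_l} \|W_l'(\cdot,j)\|_p \;=\; \sum_{l=1}^L (\lambda_l c_l)\sum_{j=1}^{d_l} \|W_l(\cdot,j)\|_p,
\]
and the identical statement holds for the fan-out regularizer with rows replacing columns. Now choose $c_l := \lambda/\lambda_l$ with $\lambda := \big(\prod_{l=1}^L \lambda_l\big)^{1/L}$. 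Then $\lambda_l c_l = \lambda$ for every $l$, so the per-layer-regularized objective evaluated at $\mathbf{W}$ equals the single-$\lambda$ objective evaluated at $\mathbf{W}'$; moreover $\prod_{l=1}^L c_l = \lambda^L / \prod_l \lambda_l = 1$, so the constraint needed above holds and the error terms coincide too.

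Finally, because every $\lambda_l > 0$ we have every $c_l > 0$, so the map $\mathbf{W} \mapsto \mathbf{W}' = (c_l W_l)_l$ is a bijection of $\prod_l \mathbb{R}^{d_{l-1}\times d_l}$ onto itself, with inverse given by the scalars $1/c_l$; it leaves $\mathbf{d}$ fixed and therefore commutes with the outer minimization over dimension vectors in \ref{npobj}. Consequently the two optimization problems have the same optimal value and their minimizers are in explicit one-to-one correspondence through this rescaling, which is precisely the asserted equivalence ``up to rescaling of weights.''

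I do not anticipate a real obstacle; the only point requiring care is the top layer $L$, where self-similarity may fail, so the leftover scale factor must be absorbed into the product constraint $\prod_l c_l = 1$ rather than pulled through $\sigma_L$ — and this is exactly why the \emph{geometric} mean of the $\lambda_l$ (the unique value making $c_l = \lambda/\lambda_l$ satisfy $\prod_l c_l = 1$) is the right choice, rather than, say, the arithmetic mean.
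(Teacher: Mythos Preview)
Your argument is correct and matches the paper's proof: both exhibit the explicit layerwise rescaling $W_l \mapsto (\lambda/\lambda_l)\,W_l$ (equivalently, its inverse $(\lambda_l/\lambda)\,W_l$), use self-similarity of $\sigma_1,\dots,\sigma_{L-1}$ together with the product constraint $\prod_l c_l = 1$ to keep $f$ unchanged, and then read off that the per-layer regularizer becomes the single-$\lambda$ regularizer. One cosmetic slip: your displayed identity actually shows $E_{\text{per-layer}}(\mathbf{W}') = E_{\lambda}(\mathbf{W})$, not ``per-layer at $\mathbf{W}$ equals single-$\lambda$ at $\mathbf{W}'$'' as you then write --- but since the rescaling is a bijection this does not affect the conclusion.
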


\begin{algorithm}[H]
\CommentSty{\color{blue}}
\SetKwInOut{Input}{input}
\Input{$\alpha_r$: radial step size; $\alpha_\phi$: angular step size; $\lambda$: regularization hyperparameter; $\beta$: mixing rate; $\epsilon$: numerical stabilizer; $\mathbf{d}^0$: initial dimensions; $\mathbf{W}^0$: initial weights; $\nu$: unit addition rate; $\nu_{\text{freq}}$: unit addition frequency; $T$: number of iterations}
$\phi_{\text{max}} = 0$; $c_{\text{max}} = 0$; $\mathbf{d} = \mathbf{d}^0$; $\mathbf{W} = \mathbf{W}^0$\;
\For{$l=1$ \KwTo $L$}
{
	set $\bar{\phi}_l$ (angular quadratic running average) and $c_l$ (angular quadratic running average capacity) to zero vectors of size $d_l^0$\; 
}
\For{$t=1$ \KwTo $T$}
{
	set $D^t$ to mini-batch used at iteration $t$\; \label{alg:setmini}
	$\mathbf{G} = \frac{1}{|D|}\nabla_\mathbf{W} \sum_{(x,y) \in D^t} e(f(\mathbf{W},x), y)$\; \label{alg:gradient}
	\For{$l=L$ \KwTo $1$}
	{
		\For{$j=d_l$ \KwTo $1$}
		{			
			decompose $[G_l(i,j)]_i$ into a component parallel to $[W_l(i,j)]_i$ (call it $r$) and a component orthogonal to $[W_l(i,j)]_i$ (call it $\phi$) such that $[G_l(i,j)]_i = r + \phi$\; \label{alg:decompose}
			$\bar{\phi}_l(j) = (1-\beta)\bar{\phi}_l(j) + \beta||\phi||_2^2$; $c_l(j) = (1-\beta)c_l(j) + \beta$\;
			$\phi_{\text{max}} = \max(\phi_{\text{max}}, \bar{\phi}_l(j))$; $c_{\text{max}} = \max(c_{\text{max}}, c_l(j))$ \;
			$\phi_{\text{adj}} = \frac{\sqrt{\frac{\phi_{\text{max}}}{c_{\text{max}}}}}{\sqrt{\frac{\bar{\phi}_l(j)}{c_l(j)}} + \epsilon}\phi$\; \label{alg:normalize}
			$[W_l(i,j)]_i = [W_l(i,j)]_i - \alpha_rr$ \; \label{alg:intact}
			rotate $[W_l(i,j)]_i$ by angle $\alpha_\phi||\phi_{\text{adj}}||_2$ in direction $-\frac{\phi_{\text{adj}}}{||\phi_{\text{adj}}||_2}$\;\label{alg:cartesian}
			$\text{shrink}([W_l(i,j)]_i, \alpha_r\lambda\frac{|D^t|}{|D|})$\; \label{alg:shrink}
			\If{$l < L$ and $[W_l(i,j)]_i$ is a zero vector \label{alg:beginAddRemove}} 
			{
				remove column $j$ from $W_l$; remove row $j$ from $W_{l+1}$; remove element $j$ from $\bar{\phi}_l$ and $c_l$; decrement $d_l$\;
			}
		} 
		\If{$t = 0 \mod \nu_{\text{freq}}$}
		{
			$\nu' = \nu$\tcp*{if $\nu \not \in \mathbb{Z}$, we can set e.g. $\nu' = \text{Poisson}(\nu)$}
			add $\nu'$ randomly initialized columns to $W_l$; add $\nu'$ zero rows to $W_{l+1}$; add $\nu'$ zero elements to $\bar{\phi}_l$ and $c_l$; $d_l = d_l + \nu'$\;\label{alg:endAddRemove}
		} 
	}
}
\Return $\mathbf{W}$\;
\caption{AdaRad with $\ell_2$ fan-in regularizer and the unit addition / removal scheme used in this paper in its most instructive (bot not fastest) order of computation. Note that $[ ]_i$ notation is used to indicate a vector over index $i$.\label{adaradalgo}} 
\end{algorithm}

\subsection{Capped batch normalization ({\it CapNorm})} \label{capNorm}

Recently, \citet{batchnorm} proposed a strategy called batch normalization that quickly became the standard for keeping feed-forward networks well-conditioned during training. In our experiments, nonparametric networks trained without batch normalization could not compete with parametric networks trained with it. Batch normalization cannot be applied directly to nonparametric networks with a fan-in or fan-out regularizer, as it would allow us to shrink the absolute value of individual weights arbitrarily while compensating with the batch normalization layer, thus negating the regularizer. Hence, we make a small adjustment which results in a strategy we term {\it capped batch normalization} or {\it CapNorm}. We subtract the mean of the pre-activations of each hidden unit, but only scale their standard deviation if that standard deviation is greater than one. If it is less than one, we do not scale it. Also, after the normalization, we do not add or multiply the result with a free parameter. Hence, CapNorm replaces each pre-activation $z$ with $\frac{z - \mu}{\max(\sigma, 1)}$, where $\mu$ is the mean and $\sigma$ is the standard deviation of that unit's pre-activations across the current mini-batch.

\subsection{Adaptive Radial-Angular Gradient Descent ({\it AdaRad})} \label{adaradsection}

The staple method for training neural networks is stochastic gradient descent. Further, there are several popular variants: momentum and Nesterov momentum \citep{nesterov}, AdaGrad \citep{adaGrad} and AdaDelta \citep{adadelta}, RMSprop \citep{rmsprop} and Adam \citep{adam}. All of these methods center around two key principles: (1) averaging the gradient obtained over consecutive iterations to smooth out oscillations and (2) normalizing each component of the gradient so that each weight learns at roughly the same speed. Principle (2) turns out to be especially important for nonparametric neural networks. When a new unit is added, it does not initially contribute to the quality of the output of the network and so does not receive much gradient from the loss term. If the gradient is not normalized, that unit may take a very long time to learn anything useful. However, if we use a fan-in regularizer, we cannot normalize the components of the gradient outright as in e.g. RMSprop, as we would also have to scale the amount of shrinkage induced by the regularizer accordingly. This, in turn, would cause the fan-in of new units to become zero before they can learn anything useful. 

We resolve this dilemma with a new training algorithm: Adaptive Radial-Angular Gradient Descent ({\it AdaRad}), shown in algorithm \ref{adaradalgo}. Like in all the algorithms cited above, we begin each iteration by computing the gradient $G$ of the loss term over the current mini-batch (line \ref{alg:gradient}). Then, for each $1 \le l \le L$ and $1 \le j \le d_l$, we decompose the sub-vector $[G_l(1,j), G_l(2,j), .., G_l(d_{l-1},j)]$ into a component parallel to its corresponding fan-in $[W_l(1,j), W_l(2,j), .., W_l(d_{l-1},j)]$ and a component orthogonal to it (line \ref{alg:decompose}). Out of the two, we normalize only the orthogonal component (line \ref{alg:normalize}) while the parallel component is left unaltered. Finally, the normalized orthogonal component of each sub-vector is added to its corresponding fan-in in radial-angular coordinates instead of cartesian coordinates (line \ref{alg:cartesian}). This ensures that it does not affect the length of the fan-in. Like the parallel component, we leave the induced shrinkage unaltered. Note that $\ell_2$ shrinkage acts only to shorten the length of each fan-in, but does not alter its direction. Hence, AdaRad with an $\ell_2$ regularizer applies a normalized shift to each fan-in that alters its direction but not its length (angular shift), as well as an un-normalized shift that includes shrinkage that alters the length of the fan-in but not its direction (radial shift, lines \ref{alg:intact} and \ref{alg:shrink}). 

AdaRad has two step sizes: One for the radial and one for the angular shift, $\alpha_r$ and $\alpha_\phi$ respectively. This is desirable as they both control the behavior of the training algorithm in different ways. The radial step size controls how long it takes for the fan-in of a unit to be shrunk to zero, i.e. the time a unit has to learn something useful. On the other hand, the angular step size controls the general speed of learning and is tuned to achieve the quickest possible descent along the error surface. 

Like RMSprop and unlike Adam, AdaRad does not make use of the principle of momentum. We have developed a variant called AdaRad-M that does. It is described in the appendix in section \ref{adaradmsection}.

\begin{table}[t]
\caption{Computational cost of efficient implementations of various algorithms, per mini-batch and weight. Operations that do not scale with the number of weights are not included. Operations associated with the computation of the gradient of the loss term (e.g. lines \ref{alg:setmini} and \ref{alg:gradient} in algorithm \ref{adaradalgo}) as well as unit addition and removal (e.g. lines \ref{alg:beginAddRemove} to \ref{alg:endAddRemove} in algorithm \ref{adaradalgo}) are not included as they do not vary between algorithms.}
\label{costTable}
\begin{center}
\begin{tabular}{lll}
Algorithm &  Network types & Cost per mini-batch and weight
\\ \hline \\
SGD, no $\ell_2$ shrinkage & param., nonparam. & 1 multiplication \\
SGD with $\ell_2$ shrinkage & param., nonparam. & 3 multiplications \\
AdaRad, no $\ell_2$ shrinkage & param., nonparam. & 4 multiplications \\
AdaRad with $\ell_2$ shrinkage & param., nonparam. & 4 multiplications \\
RMSprop, no $\ell_2$ shrinkage & param. & 4 multiplications, 1 division, 1 square root \\
RMSprop with $\ell_2$ shrinkage & param. & 6 multiplications, 1 division, 1 square root
\end{tabular}
\end{center}
\end{table}

Using AdaRad over SGD incurs additional computational cost. However, that cost scales more gracefully than the cost of, for example, RMSprop. AdaRad normalizes at the granularity of fan-ins instead of the granularity of individual weights, so many of its operations scale only with the number of units and not with the number of weights in the network. In Table \ref{costTable}, we compare the costs of SGD, AdaRad and RMSprop. Further, RMSprop has a larger memory footprint than AdaRad. Compared to SGD, it requires an additional cache of size equal to the number of weights, whereas AdaRad only requires 2 additional caches of size equal to the number of units.

\section{Experiments} \label{experiments}

\begin{figure}
\adjincludegraphics[width=\textwidth]{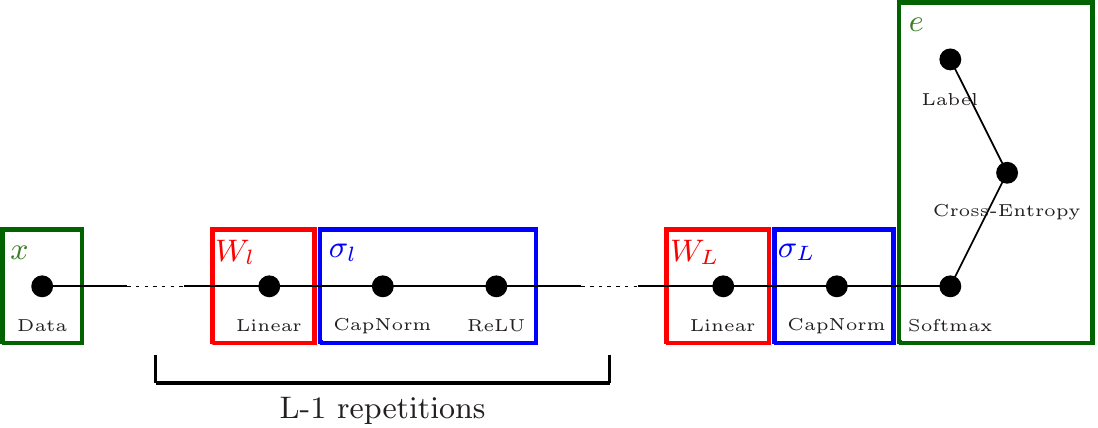}
\caption{Architecture of the nonparametric networks used in the experiments. Activations flow rightward, gradients flow leftward. In color, we show how each element corresponds to our definition of a neural network in section \ref{npnn}. CapNorm does not fully fit our definition of nonlinearity as it requires information from multiple datapoints to compute its value. Hence, theorem \ref{mainTheorem} and proposition \ref{oneLambdaProp} do not technically apply. However, CapNorm is a benign operation that does not lead to problems in practice.} \label{npnArch}
\end{figure}

We evaluated our framework using the network architecture shown in Figure \ref{npnArch} with ReLU nonlinearities and CapNorm, and using AdaRad as the training algorithm. We used two hidden layers ($L = 3$) and started off with ten units in each hidden layer and each fan-in initialized randomly with expected length 1. We add one new unit with random fan-in of expected length 1 and zero fan-out to each layer every epoch. While this does not lead to fast convergence - we have to wait until tens or hundreds of units are added - we believe that growing nets from scratch is a good test case for investigating the robustness of our framework. After the validation error stopped improving, we ceased adding units, allowing all remaining redundant units to be eliminated. We set $\alpha_r = \frac{1}{50\lambda}$, as this allows each new unit $\approx 50$ epochs to train before being eliminated by shrinkage, assuming the length of the fan-in is not altered by the gradient of the loss term. 

When training parametric networks, we replaced CapNorm with batch normalization, either with or without trainable free mean and variance parameters. We trained the network using one of the following algorithms: SGD, momentum, Nesterov momentum, RMSprop or Adam. Further experimental details can be found in the appendix in section \ref{experimentaldetails}.

\subsection{Performance} \label{performanceSection}

In this section, we investigate our two core questions: (A) Do nonparametric networks converge to a good size? (B) Do nonparametric networks achieve higher accuracy than parametric networks?

We evaluated our framework using three standard benchmark datasets - the {\it mnist} dataset, the {\it rectangles images} dataset and the {\it convex} dataset \citep{randomSearch}. We started by training nonparametric networks. Through preliminary experiments, we determined a good starting angular step size for all datasets. We chose to start with $\alpha_\phi = 30$ and repeatedly divided $\alpha_\phi$ by 3 when the validation error stopped improving. By varying the random seed, we trained 10 nets each for several values of the regularization parameter $\lambda$ per dataset and then chose a typical representative from among those 10 trained nets. Results are shown in black in figure \ref{performance}. Values of $\lambda$ are $3*10^{-3}$, $10^{-3}$ and $3*10^{-4}$ for {\it MNIST}, $3*10^{-5}$ and $10^{-6}$ for {\it rectangles images} and $10^{-5}$ and $10^{-8}$ for {\it convex}. 

\begin{figure}
\adjincludegraphics[width=\textwidth]{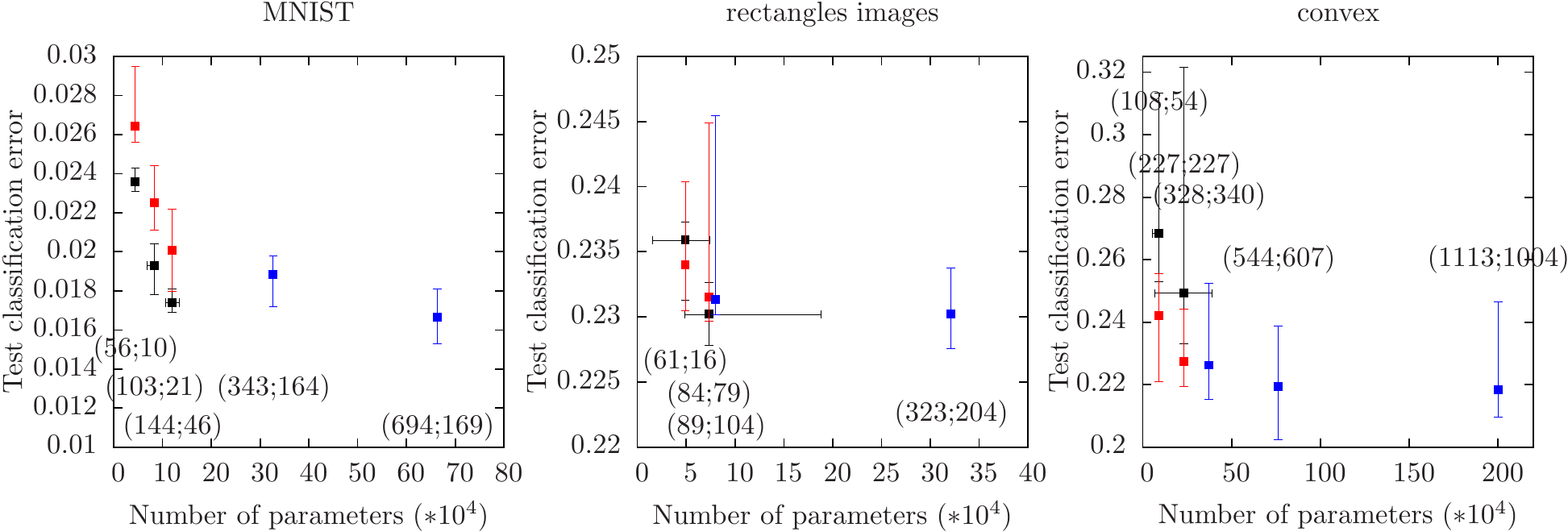}
\caption{Test classification error of trained networks. Nonparametric networks are shown in black, parametric networks in red and blue. Error bars indicate the range over 10 random reruns of the same setting. For parametric networks, the square represents the median test error over those 10 runs. For nonparametric networks, the square represents the test error and size of a single representative run that was close to the median in both size and error. In brackets below or above each plotted point, we show the number of units in the two hidden layers.} \label{performance}
\end{figure}

Then, we trained parametric networks of the same size as the chosen representatives. The top performers after an exhaustive grid search are shown in red in figure \ref{performance}. Finally, we conducted an exhaustive random search where we also varied the size of both hidden layers. The top performers are shown in blue in the same figure.

We obtain different results for the three datasets. For {\it mnist}, nonparametric networks substantially outperform parametric networks of the same size. The best nonparametric network is close in performance to the best parametric network, while being substantially smaller (144 first layer units versus 694). For {\it rectangles images}, nonparametric networks underperform parametric networks of the same size when $\lambda$ is large and outperform them when $\lambda$ is small. Here, the best nonparametric network has the globally best performance, as measured by the median test error over 10 random reruns, using substantially fewer parameters than the best parametric network.

While results for the first two datasets are very promising, nonparametric networks performed badly on the {\it convex} dataset. Parametric networks of the same size perform substantially better and also have a smaller range of performance across random reruns. Even if the model found by training nonparametric networks were re-trained as a parametric network, the apparent tendency of nonparametric networks to converge to relatively small sizes hurts us here as we would still miss out on a significant amount of performance.

We also conducted experiments with AdaRad-M, but found that performance was very similar to that of AdaRad. Hence, we omit the results. Similarly, we found no significant difference in performance between parametric networks trained with RMSprop and those trained with Adam.

\subsection{Analysis of the nonparametric training process}

In this section, we analyze in detail a single training run of a nonparametric network. We chose {\it mnist} as dataset, set $\lambda = 3*10^{-4}$ and lowered the angular step size to 10 as we did not use step size annealing. We trained for 1000 epochs while adding one unit to each hidden layer per epoch, then trained another 1000 epochs without adding new units. The final network had 193 units in the first hidden layer and 36 units in the second hidden layer. The results are shown in figure \ref{analysis}. 

In part (A), we show the validation classification error. As a comparison, we trained two parametric networks with 193 and 36 hidden units for 1000 epochs, once using SGD and the same step size and $\lambda$ as the nonparametric network, and once using optimal settings (RMSprop, $\alpha = 300$, $\lambda = 0$). It is not suprising that the parametric networks reach a good accuracy level faster, as the nonparametric network must wait for its units to be added. Also, the parametric network benefits from an increased step size - in this case $\alpha = 300$. This was true throughout our experimental evaluation.

\begin{figure}
\adjincludegraphics[width=\textwidth]{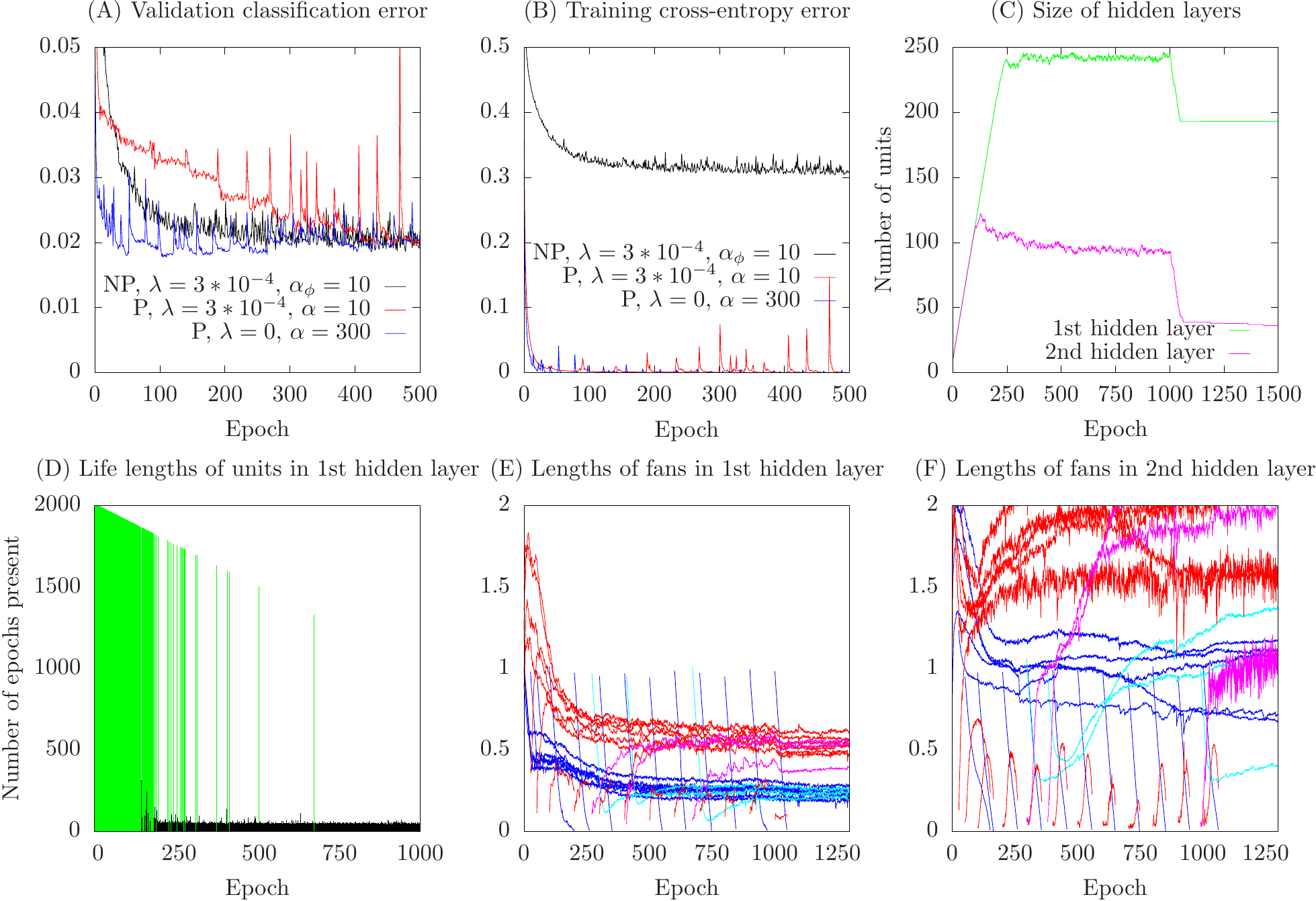}
\caption{Detailed statistics of a nonparametric training run. See main text for details.} \label{analysis}
\end{figure}

In (B), we show the training cross-entropy error for the same training runs. Interestingly, parametric networks reach an error very close to zero. In fact, the unregularized network reaches a value of $\approx 10^{-6}$ and the regularized network reaches a value of $\approx 10^{-4}$. Both made zero classification mistakes on the training set after training. In contrast, the nonparametric network did not have a near-zero training cross-entropy error. Towards the end of training, it still misclassified around 30 out of 50.000 training examples. However, this did not harm its performance on the validation or test set. In fact, the validation error of nonparametric networks tended to improve slowly for many epochs, whereas unregularized parametric networks (which were the best parametric networks when early stopping is used) tended to have a slightly increasing validation error in the long run.
 
In (C), we show the size of the two hidden layers during training. These curves are very typical of all training runs we examined. For the first $\approx 50$ epochs, no units are eliminated. This is because we chose $\alpha_r = \frac{1}{50\lambda}$, which guarantees that units that are added with a fan-in of length 1 take $\approx 50$ epochs to be eliminated, assuming no impact from the gradient of the loss term. If the layer requires a relatively large number of units, it will keep growing linearly for a while and then either plateau or shrink slightly. Once we no longer add units after 1000 epochs, both layers shrink linearly by $\approx 50$ units over $\approx 50$ iterations, as the units that were added roughly between epochs 950 and 1000 are eliminated in succession. Overall, this process shows the value of controlling $\alpha_\phi$ and $\alpha_r$ independently, as we can manage the ``overhead'' of extraneous units present during training while still ensuring an ideal speed of learning. In (D), we show the length of time individual units in the first hidden layer were present during training. On the x axis, we show the epoch during which a given unit was added. On the y axis, we show the number of epochs the unit was present. Green bars represent units that survived until the end, while black bars represent units that did not. As one might expect, units were more likely to survive the earlier they were added. Units that did not survive were eliminated in $\approx 50$ epochs. The same graph for the second hidden layer is shown in figure \ref{analysis2}.

In (E) and (F), we show the lengths of fan-ins (blue) and fan-outs (red) of units in the hidden layers. For each layer, we depict the following units in dark colors: three randomly chosen units that were initially present as well as units that were added at epochs 0, 25, 50, 100, 200, 300, .., 1000. In addition, in light colors, we show three units that were added late but not eliminated. We see a consistent pattern for individual units. First, their length decreases linearly as the CapNorm layer filters the component of the gradient parallel to the fan-ins as long as the standard deviation of the pre-activations $\sigma$ exceeds 1. During this period, the unit learns something useful and so the fan-out increases in length. When finally $\sigma < 1$, the parallel component of the gradient starts to slow down the decay and, if the unit has become useful enough, reverses it. If the decay is not reversed, the unit is eliminated. If it is reversed, both fan-in and fan-out will attain a length comparable to those of well-established units. 

From a global perspective, we notice that fan-ins in the first layer have lengths much less than 1. This is because first layer units encode primarily AND functions of highly correlated input features, meaning weights of small magnitude are sufficient to attain $\sigma = 1$. In contrast, lengths of fan-ins in the second layer are more chaotic. We found this is because $\sigma = 1$ is generally NOT attained in the second layer. In fact, the network compensated for lower activation values in the second layer by assigning fan-ins of stable lengths between 3.5 and 4.5 to the 10 output units. The network can assign these lengths dynamically without altering the output of the network because ReLU is self-similar, as described in section \ref{selfSimilar}.

\subsection{Scalability} \label{scalabilitySection}

\begin{table}[t]
\caption{Test classification error of various models trained on the {\it poker} dataset.}
\label{scaleTable}
\begin{center}
\begin{tabular}{l cccc}
Algorithm & $\lambda$ & Starting net size & Final net size & Error
\\ \hline \\
Logistic regression (ours)  & & & & 49.9\% \\
Naive bayes (OpenML) & & & & 48.3\% \\
Decision tree (OpenML)  & & & & 26.8\% \\
\multirow{4}{*}{Nonparametric net} & $10^{-3}$ & 10-10-10-10& 23-24-15-4& 0.62\%  \\
& $10^{-5}$ &10-10-10-10 &94-135-105-35 &0.022\%\\
& $10^{-6}$ &10-10-10-10 & 210-251-224-104&0.001\% \\
& $10^{-7}$ &10-10-10-10 & 299-258-259-129& 0\% \\
\multirow{4}{*}{Parametric net}  & & 23-24-15-4&unchanged & 0.20\% \\
& & 94-135-105-35& unchanged& 0.003\% \\
& & 210-251-224-104& unchanged& 0.003\% \\
& & 299-258-259-129& unchanged& 0.002\% 
\end{tabular}
\end{center}
\end{table}

Finally, we wanted to verify whether nonparametric networks could be applied to a large dataset. We visited OpenML \url{http://www.openml.org/}, a website containing many datasets as well as the performance of various machine learning models applied to those datasets. We applied nonparametric networks to the largest classification dataset \footnote{in terms of number of datapoints} on OpenML meeting our standards \footnote{our standards were: at least 10 published classification accuracy values; no published classification accuracy values exceeding 95\%; no extreme label imbalance}. This was the {\it poker} dataset \url{http://www.openml.org/d/354}. It is a binary classification dataset with 1.025.010 datapoints and 14 features per datapoint. We had no prior information about this dataset. In general, we think that nonparametric networks are most useful in cases with no prior information and thus no possibility of choosing a good parametric model a priori.

We made the following changes to the experimental setup for {\it poker}: (i) we used 4 hidden layers instead of 2 (ii) we added a unit every tenth of an epoch instead of every epoch and (iii) we multiplied the radial step size by 10, i.e. $\alpha_r = \frac{1}{5\lambda}$. The latter two changes were made as {\it poker} is approximately one order of magnitude larger than {\it mnist}, and we wanted to approximately preserve the rate of unit addition and elimination per mini-batch. Those changes were made a priori and were not based on examining their performance. 

After some exploration, we set the starting angular step size for nonparametric networks to 10. We trained nonparametric networks for various values of $\lambda$, obtaining nets of different sizes. We then trained parametric networks of those same sizes with RMSprop, where the step size was chosen by validation, independently for each network size. 

The results are shown in Table \ref{scaleTable}. Both parametric and nonparametric networks perform very well, achieving less than 1\% test error even for small networks. The nonparametric networks had a higher error for larger values of $\lambda$ and a slightly lower error for smaller values of $\lambda$. In fact, the best nonparametric network made no mistake on the test set of 100.000 examples. For comparison, we show that linear models perform roughly as well as random guessing on {\it poker}. Also, the best result published on OpenML, achieved by a decision tree classifier, vastly underperforms our 4-hidden layer networks.

To achieve convergence, networks required many more mini-batches on {\it poker} than they did on the smaller datasets used in section \ref{performanceSection}. However, since units were added to the nonparametric networks at roughly the same rate per mini-batch, the time it took those networks to converge to a stable network size (as in Figure \ref{analysis}C) was a much smaller fraction of the overall training time under {\it poker} compared to the smaller datasets. Thus, the downside of increased training time as shown in Figure \ref{analysis}A incurred when networks are built gradually was ameliorated.

\section{Further background} \label{background}

Several strategies have been introduced to address the drawbacks of black-box model selection. \citet{100iter} indeed calculate the gradient of the validation error after training with respect to certain hyperparameters, though their method only applies to specific networks trained with very specific algorithms. \citep{dldw} and \citep{optimalityConditionDanish} train certain hyperparameters jointly with the network using second order information. Such methods are limited to continuous hyperparameters and are often applied specifically to regularization hyperparameters. Several papers try to speed up the global model search by estimating the validation error of trained networks without fully training them. \citet{randomInitErrorSearch} use the validation error with randomly initialized convolutional layers as a proxy. \citet{learningCurvePrediction} predict the validation error after training based on the progress made during the first few epochs.

Several papers have achieved increased performance by growing networks during training. Our main inspiration was \citet{morphism}, who utilize a notion similar to our $f$-equivalence, though they enlarge their network in a somewhat ad-hoc way. The work of \citet{net2net} is similar, but focuses on convergence speed. \citet{stretching} transform a trained small network into a larger network by multiplying weight matrices with large, random matrices.

The performance of a network of given size can be improved by injecting knowledge from other nets trained on the same task. \citet{retrain} use the predictions of a large network on a dataset to train a smaller network on those predictions, achieving an accuracy comparable to the large network. \citet{distill} compress the information stored in an ensemble of networks into a single network. \citet{vgg} train very deep convolutional networks by initializing some layers with the trained layers of shallower networks. \citet{fitNets} train deep, thin networks utilizing hints from wider, shallower networks.

Bayesian neural networks (e.g. \citet{bayesianNeuralNetworks1}, \cite{bayesianNeuralNetworks2}) use a probabilistic prior instead of a regularizer to control the complexity of the network. Gaussian processes can been used to mimick ``infinitely wide'' neural networks (e.g. \citet{infiniteOld}, \citet{infiniteNew}), thus eliminating the need to choose layer width and replacing it with the need to choose a kernel. Compared to these and other Bayesian approaches, we work within the popular feed-forward function optimization paradigm, which has advantages in terms of computational and algorithmic complexity.

Adding units to a network one at a time is an idea with a long history. \citet{nodeAdditionOld} adds units to a single hidden layer, whereas \citet{nodeAdditionTower} builds up pyramid and tower structures and \citet{cascadeCorrelation} effectively create a new layer for each new unit. While these papers provided inspiration to us, the methods they present for determining when to add a new unit requires training the network to convergence first, which is impractical in modern settings. We circumvent this problem by adding units agnostically and providing a mechanism for removing unnecessary units.

\newpage

\section{Conclusion} \label{conclusion}

We introduced nonparametric neural networks - a simple, general framework for automatically adapting and choosing the size of a neural network during a single training run. We improved the performance of the trained nets beyond what is achieved by regular parametric networks of the same size and obtained results competitive with those of an exhaustive random search, for two of three datasets. While we believe there is room for performance improvement in several areas - e.g. unit initialization, unit addition schedule, additional regularization and starting network size - we see this paper as validation of the basic concept. We also proved the theoretical soundness of the framework.

In future work, we plan to extend our framework to include convolutional layers and to automatically choosing the depth of networks, as done by e.g. \citet{sparseCNN1}. Part of our motivation to develop nonparametric networks was to control the layer size via a continuous parameter. We want to make use of this by tuning $\lambda$ during training, either by simple annealing or in a comprehensive framework such as the one introduced in \citet{dldw}. We want to use nonparametric networks to learn more complicated network topologies for e.g. semi-supervised or multi-task learning. Finally, we plan to investigate the possibility of sampling units with different nonlinearities and training an ever-growing network for lifelong learning.



\bibliography{paper}

\begin{thebibliography}{42}
\providecommand{\natexlab}[1]{#1}
\providecommand{\url}[1]{\texttt{#1}}
\expandafter\ifx\csname urlstyle\endcsname\relax
  \providecommand{\doi}[1]{doi: #1}\else
  \providecommand{\doi}{doi: \begingroup \urlstyle{rm}\Url}\fi

\bibitem[Alvarez \& Salzmann(2016)Alvarez and Salzmann]{sparseCNN2}
Jose~M. Alvarez and Mathieu Salzmann.
\newblock Learning the number of neurons in deep networks.
\newblock In \emph{NIPS}, 2016.

\bibitem[Ash(1989)]{nodeAdditionOld}
Timur Ash.
\newblock Dynamic node creation in backpropagation networks.
\newblock \emph{Institute for Cognitive Science, UCSD}, Technical Report 8901,
  1989.

\bibitem[Ba \& Caruana(2014)Ba and Caruana]{retrain}
Lei~Jimmy Ba and Rich Caruana.
\newblock Do deep nets really need to be deep?
\newblock In \emph{NIPS}, 2014.

\bibitem[Back \& Teboulle(2006)Back and Teboulle]{FISTA}
Amir Back and Marc Teboulle.
\newblock A fast iterative shrinkage-thresholding algorithm for linear inverse
  problems.
\newblock \emph{SIAM J Imaging Sciences}, 2:\penalty0 183--202, 2006.

\bibitem[Bergstra \& Bengio(2012)Bergstra and Bengio]{randomSearch}
James Bergstra and Yoshua Bengio.
\newblock Random search for hyper-parameter optimization.
\newblock \emph{JMLR}, 13:\penalty0 281--305, 2012.

\bibitem[Chen et~al.(2016)Chen, Goodfellow, and Shlens]{net2net}
Tianqi Chen, Ian Goodfellow, and Jonathon Shlens.
\newblock Net2net: accelerating learning via knowledge transfer.
\newblock In \emph{ICLR}, 2016.

\bibitem[Collins \& Kohli(2014)Collins and Kohli]{l2l1l0}
Maxwell~D. Collins and Pushmeet Kohli.
\newblock Memory bounded deep convolutional networks.
\newblock \emph{CoRR}, pp.\  abs/1412.1442, 2014.

\bibitem[Dahl et~al.(2013)Dahl, Sainath, and Hinton]{relu}
George~E. Dahl, Tara~N. Sainath, and Geoffrey~E. Hinton.
\newblock Improving deep neural networks for lvcsr using rectified linear units
  and dropout.
\newblock In \emph{ICASSP}, 2013.

\bibitem[De~Freitas(2003)]{bayesianNeuralNetworks2}
Juan~F. De~Freitas.
\newblock Bayesian methods for neural networks.
\newblock \emph{PhD thesis, Trinity College, University of Cambridge}, 2003.

\bibitem[Duchi et~al.(2011)Duchi, Hazan, and Singer]{adaGrad}
John Duchi, Elad Hazan, and Yoram Singer.
\newblock Adaptive subgradient methods for online learning and stochastic
  optimization.
\newblock \emph{JMLR}, 12:\penalty0 2121--2159, 2011.

\bibitem[Fahlman \& Lebiere(1990)Fahlman and Lebiere]{cascadeCorrelation}
Scott Fahlman and Christian Lebiere.
\newblock The cascade-correlation learning architecture.
\newblock In \emph{NIPS}, 1990.

\bibitem[Feng \& Darrell(2015)Feng and Darrell]{pruneWeird}
Jiashi Feng and Trevor Darrell.
\newblock Learning the structure of deep convolutional networks.
\newblock In \emph{ICCV}, 2015.

\bibitem[Figurnov et~al.(2016)Figurnov, Ibraimova, Vetrov, and
  Kohli]{perforated}
Michael Figurnov, Aijan Ibraimova, Dmitry Vetrov, and Pushmeet Kohli.
\newblock Perforatedcnns: Acceleration through elimination of redundant
  convolutions.
\newblock In \emph{NIPS}, 2016.

\bibitem[Gallant(1986)]{nodeAdditionTower}
Stephen Gallant.
\newblock Three constructive algorithms for network learning.
\newblock In \emph{Conference of the Cognitive Learning Society}, 1986.

\bibitem[Guo et~al.(2016)Guo, Yao, and Chen]{networkSurgery}
Yiwen Guo, Anbang Yao, and Yurong Chen.
\newblock Dynamic network durgery for efficient dnns.
\newblock In \emph{NIPS}, 2016.

\bibitem[Hazan \& Jaakkola(2015)Hazan and Jaakkola]{infiniteNew}
Tamir Hazan and Tommi Jaakkola.
\newblock Steps toward deep kernel methods from infinite neural networks.
\newblock \emph{arXiv preprint arXiv:1508.05133}, 2015.

\bibitem[Hinton et~al.(2015)Hinton, Vinyals, and Dean]{distill}
Geoffrey Hinton, Oriol Vinyals, and Jeff Dean.
\newblock Distilling the knowledge in a neural network.
\newblock \emph{arXiv preprint arXiv:1503.02531}, 2015.

\bibitem[Hutter et~al.(2009)Hutter, Hoos, and Leyton-Brown]{randomForestHyper}
Frank Hutter, Holger~H. Hoos, and Kevin Leyton-Brown.
\newblock Sequential model-based optimization for general algorithm
  configuration (extended version).
\newblock \emph{Tech. Rep. TR-2009-01, University of British Columbia,
  Department of Computer Science}, 2009.

\bibitem[Ioffe \& Szegedy(2015)Ioffe and Szegedy]{batchnorm}
Sergey Ioffe and Christian Szegedy.
\newblock Batch normalization: Accelerating deep network training by reducing
  internal covariate shift.
\newblock In \emph{ICML}, 2015.

\bibitem[Kingma \& Ba(2015)Kingma and Ba]{adam}
Diederik~P. Kingma and Jimmy~Lei Ba.
\newblock Adam: A method for stochastic optimization.
\newblock In \emph{ICLR}, 2015.

\bibitem[Klein et~al.(2017)Klein, Falkner, Springenberg, and
  Hutter]{learningCurvePrediction}
Aaron Klein, Stefan Falkner, Jost~Tobias Springenberg, and Frank Hutter.
\newblock Dsd: Dense-sparse-dense training for deep neural networks.
\newblock In \emph{ICLR}, 2017.

\bibitem[Larsen et~al.(1998)Larsen, Svarer, Andersen, and
  Hansen]{optimalityConditionDanish}
Jan Larsen, Claus Svarer, Lars~Nonboe Andersen, and Lars~Kai Hansen.
\newblock Adaptive regularization in neural network modeling.
\newblock \emph{Neural Networks: Tricks of the Trade, 2nd Ed.}, 7700:\penalty0
  111--130, 1998.

\bibitem[Luketina et~al.(2016)Luketina, Berglund, Greff, and Tapani]{dldw}
Jelena Luketina, Mathias Berglund, Klaus Greff, and Raiko Tapani.
\newblock Scalable gradient-based tuning of continuous regularization
  hyperparameters.
\newblock In \emph{ICML}, 2016.

\bibitem[Maclaurin et~al.(2015)Maclaurin, Duvenaud, and Adams]{100iter}
Dougal Maclaurin, David Duvenaud, and Ryan~P. Adams.
\newblock Gradient-based hyperparameter optimization through reversible
  learning.
\newblock In \emph{ICML}, 2015.

\bibitem[McKay(1992)]{bayesianNeuralNetworks1}
David McKay.
\newblock A practical bayesian framework for backpropagation networks.
\newblock \emph{Neural Computation}, 4:\penalty0 448--472, 1992.

\bibitem[Molchanov et~al.(2017)Molchanov, Tyree, Karras, Aila, and
  Kautz]{pruningMany}
Pavlo Molchanov, Stephen Tyree, Tero Karras, Timo Aila, and Jan Kautz.
\newblock Pruning convolutional neural networks for efficient inference.
\newblock In \emph{ICLR}, 2017.

\bibitem[Pandey \& Dukkipati(2014)Pandey and Dukkipati]{stretching}
Gaurav Pandey and Ambedkar Dukkipati.
\newblock Learning by stretching deep networks.
\newblock In \emph{ICML}, 2014.

\bibitem[Romero et~al.(2015)Romero, Ballas, Kahou, Chassang, Gatta, and
  Bengio]{fitNets}
Adriana Romero, Nicolas Ballas, Samira~Ebrahimi Kahou, Antoine Chassang, Carlo
  Gatta, and Yoshua Bengio.
\newblock Fitnets: Hints for thin deep nets.
\newblock In \emph{ICLR}, 2015.

\bibitem[Saxe et~al.(2011)Saxe, Koh, Chen, Bhand, Suresh, and
  Ng]{randomInitErrorSearch}
Andrew Saxe, Pang~Wei Koh, Zhenghao Chen, Maneesh Bhand, Bipin Suresh, and
  Andrew Ng.
\newblock Computing with infinite networks.
\newblock In \emph{ICML}, 2011.

\bibitem[Simonyan \& Zisserman(2015)Simonyan and Zisserman]{vgg}
Karen Simonyan and Andrew Zisserman.
\newblock Very deep convolutional networks for large-scale image recognition.
\newblock In \emph{ICLR}, 2015.

\bibitem[Snoek et~al.(2012)Snoek, Larochelle, and Adams]{spearmint}
Jasper Snoek, Hugo Larochelle, and Ryan~P. Adams.
\newblock Practical bayesian optimization of machine learning algorithms.
\newblock In \emph{NIPS}, 2012.

\bibitem[Snoek et~al.(2015)Snoek, Rippel, Swersky, Kiros, Satish, Sundaram,
  Patwary, Prabhat, and Adams]{dnnIntoGaussian}
Jasper Snoek, Oren Rippel, Kevin Swersky, Ryan Kiros, Nadathur Satish,
  Narayanan Sundaram, Md. Mostofa~Ali Patwary, Prabhat, and Ryan~P. Adams.
\newblock Scalable bayesian optimization using deep neural networks.
\newblock In \emph{ICML}, 2015.

\bibitem[Springenberg et~al.(2016)Springenberg, Klein, Falkner, and
  Hutter]{bayesianNew}
Jost~Tobias Springenberg, Aaron Klein, Stefan Falkner, and Frank Hutter.
\newblock Bayesian optimization with robust bayesian neural networks.
\newblock In \emph{NIPS}, 2016.

\bibitem[Sutskever et~al.(2013)Sutskever, Martens, Dahl, and Hinton]{nesterov}
Ilya Sutskever, James Martens, George Dahl, and Geoffrey Hinton.
\newblock On the importance of initialization and momentum in deep learning.
\newblock In \emph{ICML}, 2013.

\bibitem[Tibshirani(1996)]{lasso}
Robert Tibshirani.
\newblock Regression shrinkage and selection via the lasso.
\newblock \emph{Journal of the Royal Statistical Society, Series B},
  58:\penalty0 267--288, 1996.

\bibitem[Tieleman \& Hinton(2012)Tieleman and Hinton]{rmsprop}
Tijmen Tieleman and Geoffrey Hinton.
\newblock Lecture 6.5 - rmsprop, coursera: Neural networks for machine
  learning.
\newblock 2012.

\bibitem[Wei et~al.(2016)Wei, Wang, Rui, and Chen]{morphism}
Tao Wei, Changhu Wang, Yong Rui, and Chang~Wen Chen.
\newblock Network morphism.
\newblock In \emph{ICML}, 2016.

\bibitem[Wen et~al.(2016)Wen, Wu, Wang, Chen, and Li]{sparseCNN1}
Wei Wen, Chunpeng Wu, Wandan Wang, Yiran Chen, and Hai Li.
\newblock Learning structured sparsity in deep neural networks.
\newblock In \emph{NIPS}, 2016.

\bibitem[Williams(1997)]{infiniteOld}
Christopher K.~I. Williams.
\newblock Computing with infinite networks.
\newblock In \emph{NIPS}, 1997.

\bibitem[Yuan \& Lin(2006)Yuan and Lin]{groupLasso}
Ming Yuan and Yin Lin.
\newblock Model selection and estimation in regression with grouped variables.
\newblock \emph{Journal of the Royal Statistical Society, Series B},
  68:\penalty0 49--67, 2006.

\bibitem[Zeiler(2012)]{adadelta}
Matthew~D. Zeiler.
\newblock Adadelta: An adaptive learning rate method.
\newblock \emph{arXiv preprint arXiv:1212.5701}, 2012.

\bibitem[Zoph \& Le(2017)Zoph and Le]{reinforce}
Barret Zoph and Quoc~V. Le.
\newblock Neural architecture search with reinforcement learning.
\newblock In \emph{ICLR}, 2017.

\end{thebibliography}
\bibliographystyle{iclr2017_conference}

\newpage

\section{Appendix}

\subsection{Proof of theorem \ref{mainTheorem}} \label{proof}


First, we restate the theorem formally.

\begin{reptheorem}{mainTheorem}

For all 

\begin{itemize}
\item $L, d_0, d_L \in \mathbb{Z}_+$
\item finite datasets $D$ of points $(x,y)$ with $x \in \mathbb{R}^{d_0}$ and $y \in Y$ for some set $Y$
\item sets of nonlinearities $\{ \sigma_l : \mathbb{R} \rightarrow \mathbb{R}, 1 \le l \le L\}$ where each $\sigma_l$ fulfils the following conditions:

\begin{itemize}
\item There exists a function $b_{1,l} : \mathbb{R}_{\ge 0} \rightarrow \mathbb{R}_{\ge 0}$ such that for all $S \in \mathbb{R}_{\ge 0}$, $-S \le s \le S$, we have $|\sigma_l(s)| \le  b_{1,l}(S) * |s|$.
\item It is left- and right-differentiable everywhere. 
\item There exists a function $b_{2,l} : \mathbb{R}_{\ge 0} \rightarrow \mathbb{R}_{\ge 0}$ such that for all $S \in \mathbb{R}_{\ge 0}$, $-S \le s \le S$, we have $|\sigma^\leftarrow_l(s)| \le  b_{2,l}(S)$ and $|\sigma^\rightarrow_l(s)| \le  b_{2,l}(S)$, where the superscripts indicate directional derivatives.
\end{itemize}

\item error functions $e : (\mathbb{R}^{d_L} \times Y) \rightarrow \mathbb{R}$ that fulfils the following conditions:

\begin{itemize}
\item It is non-negative everywhere.
\item It is differentiable with respect to its first argument everywhere.
\item There exists a function $b_3 : \mathbb{R}_{\ge 0} \rightarrow \mathbb{R}_{\ge 0}$ such that for all $S \in \mathbb{R}_{\ge 0}$, $v \in \mathbb{R}^{d_L}$ and $y \in Y$, we have $e(v, y) \le S \implies ||\frac{de(v,y)}{dv}||_\infty \le b_3(S)$
\end{itemize}

\item $\lambda > 0$ and $1 \le p < \infty$
\item $\Omega \in \{\Omega_{in}, \Omega_{out}\}$

\end{itemize}

we have that 

\begin{equation} \label{objRestate}
E(\mathbf{d},\mathbf{W}) = \frac{1}{|D|}\sum_{(x,y) \in D} e(f(\mathbf{W},x), y) + \Omega(\mathbf{W}, \lambda, p)
\end{equation}

attains a global minimum.
\end{reptheorem}

Most commonly used nonlinearities are admissible under this theorem as long as $\sigma(0) = 0$, i.e. the sigmoid non-linearity is not admissible, but the $\tanh$ non-linearity is. Note that nonlinearities, away from zero, are allowed to grow at an almost arbitrary pace. For example, polynomial or even exponential nonlinearities are possible. Note that the first condition on nonlinearities is technically implied by the other two as long as $\sigma(0) = 0$, though we will not prove this.

The conditions for the error function cover the two most popular choices: cross-entropy coupled with softmax (as in Figure \ref{npnArch}) - and the square of the $\ell_2$ distance. 

We will prove this theorem through a sequence of lemmas. Throughout this process, all inputs to the main theorem are considered fixed and fulfilling their respective conditions.

\begin{lemma} \label{finite}
Theorem \ref{mainTheorem} holds if $\mathbf{d}$ is fixed.
\end{lemma}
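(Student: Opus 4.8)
The plan is to prove Lemma \ref{finite} by showing that, for fixed $\mathbf{d}$, the objective $E(\mathbf{d},\cdot)$ is a continuous and coercive function on the finite-dimensional space $\prod_{l=1}^L \mathbb{R}^{d_{l-1}*d_l}$, and then invoking the standard fact that a continuous, coercive function on $\mathbb{R}^n$ attains its infimum.

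First I would establish continuity. Each $\sigma_l$ is left- and right-differentiable everywhere and is therefore continuous, so for each fixed $x$ the map $\mathbf{W}\mapsto f(\mathbf{W},x)$ is continuous, being a finite composition of matrix multiplications and elementwise applications of the $\sigma_l$. Since $e$ is differentiable, hence continuous, in its first argument, $\mathbf{W}\mapsto e(f(\mathbf{W},x),y)$ is continuous, and so is the finite average over $D$. The regularizer $\Omega(\cdot,\lambda,p)$ is a finite sum of $\ell_p$-norms of subvectors of $\mathbf{W}$, hence continuous. Thus $E(\mathbf{d},\cdot)$ is continuous.

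Next I would establish coercivity. The key observation is that, for fixed $\mathbf{d}$ and for $\Omega\in\{\Omega_{in},\Omega_{out}\}$, the map $\mathbf{W}\mapsto \Omega(\mathbf{W},\lambda,p)/\lambda$ is itself a norm on $\prod_{l=1}^L \mathbb{R}^{d_{l-1}*d_l}$: it is a finite sum of the $\ell_p$-norms of the columns (for $\Omega_{in}$) or rows (for $\Omega_{out}$) of the $W_l$, so it is absolutely homogeneous, subadditive, and vanishes exactly when $\mathbf{W}=0$. By equivalence of norms on a finite-dimensional space there is a constant $c>0$ with $\Omega(\mathbf{W},\lambda,p)\ge \lambda c\|\mathbf{W}\|_2$; since $e\ge 0$ everywhere, this yields
\begin{equation*}
E(\mathbf{d},\mathbf{W})\ \ge\ \Omega(\mathbf{W},\lambda,p)\ \ge\ \lambda c\,\|\mathbf{W}\|_2,
\end{equation*}
so $E(\mathbf{d},\mathbf{W})\to\infty$ as $\|\mathbf{W}\|_2\to\infty$. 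To finish, pick any $\mathbf{W}_0$ and set $M=E(\mathbf{d},\mathbf{W}_0)$; coercivity gives an $R>0$ with $E(\mathbf{d},\mathbf{W})>M$ whenever $\|\mathbf{W}\|_2>R$, so the infimum of $E(\mathbf{d},\cdot)$ over the whole space equals its infimum over the compact ball $\{\|\mathbf{W}\|_2\le R\}$, which is attained by continuity and is therefore a global minimum.

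I do not expect a genuine obstacle in this lemma: with $\mathbf{d}$ fixed the parameter space is finite-dimensional, and none of the growth bounds $b_{1,l}$, $b_{2,l}$, $b_3$ assumed in Theorem \ref{mainTheorem} are needed here — only $\lambda>0$, $p<\infty$, nonnegativity of $e$, and continuity. The real difficulty is deferred to the later lemmas that allow $\mathbf{d}$ to vary, where one must rule out the infimum over all widths being approached only along a sequence of ever-wider networks; that is presumably where the growth conditions enter, to control how fast the loss term can decrease as new units are added against the price the fan-in/fan-out penalty charges for them.
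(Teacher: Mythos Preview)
Your proof is correct and follows essentially the same approach as the paper: continuity of $E(\mathbf{d},\cdot)$ plus the observation that, since $e\ge 0$, the regularizer alone forces the sublevel sets to be bounded, so the minimum is attained on a compact set. The only cosmetic difference is that the paper bounds individual weight entries directly by $B/\lambda$ with $B=E(\mathbf{d},\mathbf{0})$, whereas you invoke equivalence of norms to obtain coercivity; both arrive at the same compactness argument.
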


I.e. in the parametric case, \ref{objRestate} attains a global minimum.

\begin{proof}
Let $d$ be fixed. Let $B = E(\mathbf{d},\mathbf{0})$, where $\mathbf{0}$ is the value of $\mathbf{W}$ of dimensionality $\mathbf{d}$ where all individual weights are set to zero. Then let $\mathbf{W}_B$ be the space of all $\mathbf{W}$ of dimensionality $\mathbf{d}$ which have at least one individual weight with absolute value greater than $\frac{B}{\lambda}$. Clearly, $E(\mathbf{d}, \mathbf{W}) > B$ for all $\mathbf{W} \in \mathbf{W}_B$. Since $\mathbb{R}^{\mathbf{d}} \backslash \mathbf{W}_B$ is compact and $E$ is continuous, there exists a point $\mathbf{W}_{\text{min}}$ that is a minimum of $E$ inside $\mathbb{R}^{\mathbf{d}} \backslash \mathbf{W}_B$. Further, $\mathbb{R}^{\mathbf{d}} \backslash \mathbf{W}_B$ contains at least one point, namely $\mathbf{0}$, for which $E \le B$, so a minimum within $\mathbb{R}^{\mathbf{d}} \backslash \mathbf{W}_B$ is indeed a global minimum, the existence of which was required.
\end{proof}

Now, some definitions:

\begin{itemize}
\item We call a parameter value $(\mathbf{d},\mathbf{W})$ a {\it local minimum} of $E$ iff it is a local minimum in its second component, $\mathbf{W}$.
\item We call a local minimum of $E$ {\it $B$-locally minimal} for some $B \in \mathbb{R}$ iff the value of $E$ at that minimum does not exceed $B$.
\item We call the {\it proper dimensionality} of $\mathbf{W}$ the dimensionality obtained when eliminating from $\mathbf{W}$ all units which have a zero fan-in or a zero fan-out or both.
\item We call a parameter value $(\mathbf{d},\mathbf{W})$ {\it proper} if $\mathbf{d}$ is the proper dimensionality of $\mathbf{W}$. We also call a local minimum with such a parameter value proper.
\item Denote $(d_1, .., d_l)$ by $\mathbf{d}_{\le l}$ and $(W_1, .., W_l)$ by $\mathbf{W}_{\le l}$.
\item $D = \{(x^{(0)}, y^{(0)}), (x^{(1)}, y^{(1)}), .., (x^{(N)}, y^{(N)})\}$
\item We denote intermediate computations of the neural network $f(\mathbf{W}, x)$ as follows:

\begin{eqnarray}
x_0 := x &&\\
z_l := x_{l-1}W_l && 1 \le l \le L \\
x_l := \sigma_l.(z_l) && 1 \le l \le L \\
f(\mathbf{W}, x) = x_L &&
\end{eqnarray}

\item We denote the gradients of $e(f(\mathbf{W}, x),y)$, when they are defined, as follows:

\begin{eqnarray}
g_l := \frac{de(f(\mathbf{W}, x),y)}{dx_l} &&0 \le l \le L \\
h_l := \frac{de(f(\mathbf{W}, x),y)}{dz_l} &&1 \le l \le L \\
G_l := \frac{de(f(\mathbf{W}, x),y)}{dW_l} &&1 \le l \le L
\end{eqnarray}

\item Vector and matrix indeces are written in brackets. For example, the $j$'th component of $z_l^{(n)}$ is denoted by $z_l^{(n)}(j)$.
\item We denote by square brackets a vector and by its subscript the index the vector is over, e.g. $[v_i]_i$ is a vector over index $i$.
\end{itemize}

\begin{lemma} \label{meat1}
Under the conditions of theorem \ref{mainTheorem} and the additional condition that the $\sigma_l$ are differentiable everywhere, if $\Omega$ is the fan-in regularizer, then for all $B$, the set of values of $\mathbf{d}$ for which there exist proper $B$-local minima is bounded.
\end{lemma}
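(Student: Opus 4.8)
Fix $B$ and let $(\mathbf{d},\mathbf{W})$ denote an arbitrary proper $B$-local minimum, and let $q$ be the Hölder conjugate of $p$. The goal is to bound each hidden width $d_l$ ($1\le l\le L-1$) by a quantity depending only on the data and on $B,\lambda,p,L$ and the functions $b_{1,l},b_{2,l},b_3$, never on $\mathbf{d}$. I would proceed in three stages. \emph{Stage 1 (a priori bounds).} Since $E\le B$ and the loss is nonnegative, $\Omega_{in}(\mathbf{W},\lambda,p)\le B$, so every fan-in $[W_l(i,j)]_i$ has $p$-norm at most $B/\lambda$ and, per layer, the fan-in $p$-norms sum to at most $B/\lambda$. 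Combining $|z_l^{(n)}(j)|\le\|x_{l-1}^{(n)}\|_q\,\|[W_l(i,j)]_i\|_p\le\|x_{l-1}^{(n)}\|_1\cdot B/\lambda$ (Hölder, and $\|\cdot\|_q\le\|\cdot\|_1$) with $|\sigma_l(s)|\le b_{1,l}(|s|)|s|$ gives $\|x_l^{(n)}\|_1\le b_{1,l}\!\big(\|x_{l-1}^{(n)}\|_1 B/\lambda\big)\,\|x_{l-1}^{(n)}\|_1 B/\lambda$, so by induction on $l$ (base case $\|x_0^{(n)}\|_1=\|x^{(n)}\|_1$, a datum) every activation is bounded in $\ell_1$ by a $\mathbf{d}$-free constant $M_l$. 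Hence every pre-activation lies in a fixed bounded range, so (using the extra hypothesis that $\sigma_l$ is differentiable) $|\sigma_l'(z_l^{(n)}(j))|\le B^{(2)}_l$ with $B^{(2)}_l$ $\mathbf{d}$-free; and from $e_n\le NB$ we get $\|g_L^{(n)}\|_\infty\le b_3(NB)$.

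\emph{Stage 2 (lower bound per unit).} Perturb the minimum by shrinking one fan-in $v:=[W_l(i,j)]_i$ toward the origin, i.e.\ in the direction $-v$ on those coordinates. The only term of $\Omega_{in}$ affected is $\lambda\|v\|_p$, whose right derivative in this direction is $-\lambda\|v\|_p$, and the (now differentiable) loss contributes $-\langle \bar{g}_{l,j}^{\,\mathrm{in}},v\rangle$, where $\bar{g}_{l,j}^{\,\mathrm{in}}=\tfrac1N\sum_n h_l^{(n)}(j)\,x_{l-1}^{(n)}$ is the averaged loss gradient with respect to this fan-in. Local minimality forces this directional derivative to be $\ge 0$, so $\langle\bar{g}_{l,j}^{\,\mathrm{in}},v\rangle\le-\lambda\|v\|_p$; since properness forces $v\neq\mathbf{0}$, Hölder gives $\|\bar{g}_{l,j}^{\,\mathrm{in}}\|_q\ge\lambda$ for \emph{every} unit. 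Then $\|\bar{g}_{l,j}^{\,\mathrm{in}}\|_q\le\tfrac1N\sum_n|h_l^{(n)}(j)|\,\|x_{l-1}^{(n)}\|_1\le\tfrac{M_{l-1}}{N}\sum_n|h_l^{(n)}(j)|$ yields $\sum_n|h_l^{(n)}(j)|\ge N\lambda/M_{l-1}$ for each unit, and summing over the $d_l$ units, $\sum_n\|h_l^{(n)}\|_1\ge N\lambda\,d_l/M_{l-1}$.

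\emph{Stage 3 (backward recursion).} I would now bound $\sum_n\|h_l^{(n)}\|_1$ from above by descending $l$. At $l=L$: $\|g_L^{(n)}\|_1\le d_L\,b_3(NB)$ ($d_L$ fixed) and $\|h_L^{(n)}\|_1\le B^{(2)}_L\|g_L^{(n)}\|_1$, so $\sum_n\|h_L^{(n)}\|_1\le H_L$ is $\mathbf{d}$-free. Inductively, if $\sum_n\|h_{l+1}^{(n)}\|_1\le H_{l+1}$ then, since $g_l^{(n)}=W_{l+1}h_{l+1}^{(n)}$,
\[
\|g_l^{(n)}\|_1\le\|h_{l+1}^{(n)}\|_\infty\sum_k\|[W_{l+1}(i,k)]_i\|_1\le H_{l+1}\,d_l^{\,1-1/p}\sum_k\|[W_{l+1}(i,k)]_i\|_p\le H_{l+1}\,d_l^{\,1-1/p}\,B/\lambda ,
\]
whence $\sum_n\|h_l^{(n)}\|_1\le N B^{(2)}_l H_{l+1}\,d_l^{\,1-1/p}\,B/\lambda$. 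Comparing with the Stage 2 lower bound, $N\lambda d_l/M_{l-1}\le N B^{(2)}_l H_{l+1}d_l^{\,1-1/p}B/\lambda$, i.e.\ $d_l\le\big(M_{l-1}B^{(2)}_l H_{l+1}B/\lambda^2\big)^p$, a $\mathbf{d}$-free bound on $d_l$. Feeding this width bound back makes $\sum_n\|g_l^{(n)}\|_1$ and $\sum_n\|h_l^{(n)}\|_1$ $\mathbf{d}$-free, so the recursion continues down to $l=1$, bounding all hidden widths and hence the set of admissible $\mathbf{d}$.

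\textbf{Main obstacle.} The delicate point is the tension in Stage 3: local minimality together with properness force $\sum_n\|h_l^{(n)}\|_1$ to grow \emph{linearly} in $d_l$, whereas the only natural control on the back-propagated gradient, via $\|u\|_1\le d_l^{\,1-1/p}\|u\|_p$ applied to the next layer's fan-ins, grows only like $d_l^{\,1-1/p}$, which at first looks too weak. The resolution is that $1-1/p<1$ for every $p<\infty$, so the inequality $d_l\le C\,d_l^{\,1-1/p}$ already forces $d_l\le C^{p}$; this is exactly where the hypothesis $p<\infty$ is used, and it is why the statement would fail at $p=\infty$. A secondary point requiring care is bookkeeping: one must verify at each step that the accumulated constants ($M_l$, $B^{(2)}_l$, $H_l$, and the final width bounds) genuinely depend only on the fixed inputs and not on the dimensionality of the particular minimum, which is precisely what the forward pass of Stage 1 secures.
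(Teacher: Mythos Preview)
Your proposal is correct and follows essentially the same three-stage strategy as the paper: forward $\ell_1$ bounds on activations from $\Omega_{in}\le B$ (the paper's Claims~1a--c), a per-unit lower bound on the backpropagated gradient from first-order optimality at a proper minimum (the paper's Claim~3), and a backward induction on the widths that pits this linear-in-$d_l$ lower bound against an upper bound growing like $d_l^{1-1/p}$ via $\|u\|_1\le d_l^{1-1/p}\|u\|_p$ (the paper's Claim~4). The one structural variation is that the paper first proves \emph{componentwise}, $\mathbf d$-free bounds $|h_l^{(n)}(i)|\le B_{h,l}$ on the backward signal (its Claims~2a--b), which lets it run Claim~4 as a clean induction $D_l\le (C\,D_{l+1})^p$ via a pigeonhole step, whereas you bound only the aggregate $\sum_n\|h_l^{(n)}\|_1$ and must interleave the width bound with the gradient bound at each layer; both routes are valid and hinge on the same use of $p<\infty$.
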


\begin{lemma} \label{meat2}
Under the conditions of theorem \ref{mainTheorem} and the additional condition that the $\sigma_l$ are differentiable everywhere, if $\Omega$ is the fan-out regularizer, then for all $B$, the set of values of $\mathbf{d}$ for which there exist proper $B$-local minima is bounded.
\end{lemma}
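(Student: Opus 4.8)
The plan is to prove Lemma~\ref{meat2} by the same scheme one would use for Lemma~\ref{meat1}, simply interchanging the roles of incoming and outgoing weights. Fix $B$ and let $(\mathbf{d},\mathbf{W})$ be a proper $B$-local minimum of $E$; the goal is to bound each hidden width $d_l$ by a quantity depending only on $L$, $D$, the growth functions $b_{1,l},b_{2,l},b_3$, $\lambda$ and $p$. Since $e\ge 0$ everywhere, $\Omega_{out}(\mathbf{W},\lambda,p)\le B$, so $\sum_{l}\sum_{i}\|[W_l(i,1),\dots,W_l(i,d_l)]\|_p\le B/\lambda$. The one observation that makes the argument $\mathbf{d}$-free is that this bounds the $\ell_1$-norm of \emph{every column} of every $W_l$ by $B/\lambda$: $|W_l(i,j)|\le\|[W_l(i,k)]_k\|_p$, and summing over $i$ the right-hand side is at most $B/\lambda$. (For $\Omega_{in}$ the analogous statement bounds row $\ell_1$-norms, which is where the asymmetry between the two lemmas comes from.)

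First I would obtain uniform, $\mathbf{d}$-free bounds on the whole forward and backward pass at such a minimum. Forwards, $z_l^{(n)}(j)=\langle x_{l-1}^{(n)},[W_l(i,j)]_i\rangle$, so $|z_l^{(n)}(j)|\le (B/\lambda)\max_i|x_{l-1}^{(n)}(i)|$, and the growth bound $|\sigma_l(s)|\le b_{1,l}(S)|s|$ turns a bound on $|z_l^{(n)}(j)|$ into one on $|x_l^{(n)}(j)|$; the induction on $l$ starts from $x_0^{(n)}=x^{(n)}$, which ranges over the finite set $D$. Backwards, $e(f(\mathbf{W},x^{(n)}),y^{(n)})\le |D|\,B$ gives $\|g_L^{(n)}\|_\infty\le b_3(|D|B)$ and hence $\|g_L^{(n)}\|_1\le d_L\,b_3(|D|B)$, while for $l<L$ one has $\|g_l^{(n)}\|_1=\sum_i|\langle h_{l+1}^{(n)},[W_{l+1}(i,k)]_k\rangle|\le \sum_k|h_{l+1}^{(n)}(k)|\sum_i|W_{l+1}(i,k)|\le (B/\lambda)\,\|h_{l+1}^{(n)}\|_1$, together with $\|h_{l+1}^{(n)}\|_1\le b_{2,l+1}(S_{l+1})\,\|g_{l+1}^{(n)}\|_1$ coming from $h_{l+1}^{(n)}(k)=g_{l+1}^{(n)}(k)\,\sigma_{l+1}'(z_{l+1}^{(n)}(k))$ and the forward bound $S_{l+1}$ on the pre-activations. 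Chaining from $l=L$ downward gives a $\mathbf{d}$-free bound on every $\|g_l^{(n)}\|_1$, hence on every $|g_l^{(n)}(j)|$, $|h_l^{(n)}(j)|$ and $|G_l(i,j)|$.

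Next I would write the first-order optimality conditions at each unit; this is where the extra hypothesis of the lemma, that the $\sigma_l$ are differentiable everywhere, is used — it makes the loss term $C^1$ in $\mathbf{W}$. For a hidden unit $i$ in layer $l$, its fan-out $[W_{l+1}(i,k)]_k$ is the only part of $\mathbf{W}$ that enters both the loss (via $G_{l+1}(i,\cdot)$) and exactly one term $\lambda\|[W_{l+1}(i,k)]_k\|_p$ of $\Omega_{out}$, so stationarity yields $G_{l+1}(i,\cdot)=-\lambda s_i$ with $s_i$ in the subdifferential of $\|\cdot\|_p$ at the fan-out (which is nonzero by properness; for $p=1$ one uses one-sided directional derivatives instead). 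Since $\|\cdot\|_p$ is positively homogeneous of degree one, $\langle s_i,[W_{l+1}(i,k)]_k\rangle=\|[W_{l+1}(i,k)]_k\|_p$ and $\|s_i\|_q=1$ with $\tfrac1p+\tfrac1q=1$, so $\|G_{l+1}(i,\cdot)\|_q=\lambda$ for \emph{every} hidden unit. Unwinding backprop, $G_{l+1}(i,\cdot)=\sum_n x_l^{(n)}(i)\,h_{l+1}^{(n)}$: the $d_l$ vectors $G_{l+1}(i,\cdot)$ therefore all have $\ell_q$-norm $\lambda$, all lie in the span of the $\le|D|$ vectors $\{h_{l+1}^{(n)}\}$, and are the images, under one fixed linear map, of the activation profiles $[x_l^{(n)}(i)]_n$, which by the previous step live in a bounded box of $\mathbb{R}^{|D|}$.

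The remaining step — deducing a bound on $d_l$ from these ingredients — is the main obstacle. Having $d_l$ unit-$\ell_q$ vectors inside a bounded, at most $|D|$-dimensional set does not by itself bound $d_l$, so one must exploit more structure: the first-order condition for each fan-in (which, for the fan-out regularizer, is \emph{un}penalized, so it reads $G_l(\cdot,i)=-\lambda S_{l-1}(\cdot,i)$ with $S_{l-1}$ having $\ell_q$-unit rows) further constrains the activation profiles, and units with proportional behaviour are forced to coincide by the same local-minimality perturbation used for the fan-out. I expect this to need a genuine linear-algebraic or compactness argument, with a constant that depends on $p$ and is most delicate for large $p$, where $\|\cdot\|_p$ and $\|\cdot\|_1$ diverge, and to be exactly the device needed for Lemma~\ref{meat1} with fan-ins and fan-outs swapped. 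Once $\mathbf{d}$ is shown to range over a finite set on proper $B$-local minima, the main theorem follows by combining this with Lemma~\ref{finite} over those finitely many $\mathbf{d}$.
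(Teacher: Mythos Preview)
Your forward and backward estimates are correct and are exactly the paper's Claims~1 and~2 for the fan-out case: componentwise ($\ell_\infty$) bounds on activations and pre-activations going forward, and $\ell_1$ bounds on $g_l^{(n)}$ and $h_l^{(n)}$ going backward. You also correctly isolate the first-order condition $\|G_{l+1}(i,\cdot)\|_q=\lambda$ for every hidden unit~$i$.

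The genuine gap is that you do not finish the argument, and the directions you speculate about (counting unit-norm vectors in a low-dimensional set, or invoking the first-order condition for the fan-in) are not what is needed. In particular, your remark that under $\Omega_{out}$ the fan-in is ``unpenalized'' is not right: each entry $W_l(k,i)$ is penalized through the $p$-norm of row~$k$, so the fan-in stationarity condition does not take the simple form you wrote.

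The missing step is short and mirrors Claims~3 and~4 of the fan-in proof with activations playing the role gradients played there. From $\lambda=\|G_{l+1}(i,\cdot)\|_q\le\|G_{l+1}(i,\cdot)\|_1\le\sum_n |x_l^{(n)}(i)|\,\|h_{l+1}^{(n)}\|_1\le B_{h,l+1}\sum_n|x_l^{(n)}(i)|$ you get a \emph{per-unit lower bound} $\sum_n|x_l^{(n)}(i)|\ge B_2$ for every hidden unit~$i$ (Claim~3). Summing over~$i$ gives $d_lB_2\le\sum_{n,i}|x_l^{(n)}(i)|\le b_{1,l}(B_{z,l})\sum_{n,i,k}|x_{l-1}^{(n)}(k)|\,|W_l(k,i)|$. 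By the pigeonhole principle over the $N d_{l-1}$ pairs $(n,k)$, there is some $(n',k')$ with
\[
|x_{l-1}^{(n')}(k')|\,\|[W_l(k',i)]_i\|_1 \;\ge\; \frac{d_lB_2}{b_{1,l}(B_{z,l})\,N\,d_{l-1}}.
\]
Now use your own forward bound $|x_{l-1}^{(n')}(k')|\le B_{x,l-1}$ together with $\|[W_l(k',i)]_i\|_1\le d_l^{(p-1)/p}\|[W_l(k',i)]_i\|_p\le d_l^{(p-1)/p}\,B/\lambda$ (the fan-out regularizer bounds each row's $p$-norm) to obtain $d_l^{1/p}\le C\,d_{l-1}$ for a constant $C$ depending only on the quantities already controlled. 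Induct forward from the fixed $d_0$ to bound every $d_l$ (Claim~4). No compactness or linear-algebraic counting is needed; the $p$-dependence enters only through the $\ell_1$--$\ell_p$ inequality $\|\cdot\|_1\le d_l^{(p-1)/p}\|\cdot\|_p$.
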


Lemmas \ref{meat1} and \ref{meat2} are the core segments of the overall proof. Here we show that that very large nets have no ``good'' local minima.

\begin{proof}[Proof of lemma \ref{meat1}]

%
%
%

Throughout this proof, we consider $B$ fixed.

{\it Claim 1a}: There exist constants $B_{x,l}$, $0\le l\le L$, such that at all proper $B$-local minima, for all $1 \le n \le N$, for all $0 \le l \le L$, we have $||x_l^{(n)}||_1 \le B_{x,l}$.

{\it Claim 1b}: There exist constants $B_{z,l}$, $1\le l\le L$, such that at all proper $B$-local minima, for all $1 \le n \le N$, for all $1 \le l \le L$, we have $||z_l^{(n)}||_1 \le B_{z,l}$.

{\it Claim 1c}: There exist constants $B_{d\sigma,l}$, $1\le l\le L$, such that at all proper $B$-local minima, for all $1 \le n \le N$, for all $1 \le l \le L$, for all $1 \le j \le d_l$, we have $|\sigma'(z_l^{(n)}(j))| \le B_{d\sigma,l}$.

First, we notice that it is sufficient to prove the bounds exist for a specific datapoint. The uniform bound across all datapoints is then simply the maximum of the individual bounds. Denote by $(x,y)$ an arbitrary fixed datapoint throughout the proof of the above claims. Also, notice that the claims are trivially true if there are no proper $B$-local minima. Hence, throughout the proof of the claims, we assume there exists at least one such minimum.

We will prove the claims jointly by induction. The order of the induction follows the order of computation of the neural network. Our starting case will be $x_0$, followed by $z_1$, $f_1'$ and $x_1$ etc.

The starting case is obvious as $x_0 = x$ is fixed and does not depend on the parameter $(\mathbf{d}, \mathbf{W})$. Hence we can choose $B_{x,0} = ||x||_1$.

Now assume we have $B_{x,l-1}$ such that $\sup_{(\mathbf{d},\mathbf{W}) \text{proper $B$-locally minimal}} ||x_{l-1}||_1 \le B_{x,l-1}$. Then:

\begin{eqnarray}
&&\sup_{(\mathbf{d},\mathbf{W}) \text{proper $B$-locally minimal}} ||z_{l}||_1 \label{a0101}\\
&=&\sup_{(\mathbf{d},\mathbf{W}) \text{proper $B$-locally minimal}} ||x_{l-1}W_l||_1\label{a0102}\\
&\le &\sup_{(\mathbf{d},\mathbf{W}), ||x_{l-1}||_1 \le B_{x,l-1},\lambda \sum_{j=1}^{d_{l}} ||[W_l(i,j)]_i||_p \le B} ||x_{l-1}W_l||_1\label{a0103}\\
&=&\sup_{\mathbf{d}_{\le l}} (\sup_{W_{l}, \lambda \sum_{j=1}^{d_{l}} ||[W_l(i,j)]_i||_p \le B} (\sup_{\mathbf{W}_{< l}, ||x_{l-1}||_1 \le B_{x,l-1}} ||x_{l-1}W_l||_1))\label{a0104}\\
&\le&\sup_{\mathbf{d}_{\le l}} (\sup_{W_{l}, \lambda \sum_{j=1}^{d_{l}} ||[W_l(i,j)]_i||_p \le B} (\sup_{u, \text{dim}(u) = d_{l-1}, ||u||_1 \le B_{x,l-1}} ||u^TW_l||_1))\label{a0105}\\
&=&\sup_{\mathbf{d}_{\le l}} (\sup_{W_{l}, \lambda \sum_{j=1}^{d_{l}} ||[W_l(i,j)]_i||_p \le B} (\sup_{u, \text{dim}(u) = d_{l-1}, ||u||_1 \le B_{x,l-1}} \sum_{j=1}^{d_l}|u^T[W_l(i,j)]_i|))\label{a0106}\\
&=&\sup_{\mathbf{d}_{\le l}} (\sup_{c_j \ge 0,\sum_{j=1}^{d_{l}} c_j \le \frac{B}{\lambda}} (\sup_{W_{l}, ||[W_l(i,j)]_i||_p = c_j} (\sup_{u, \text{dim}(u) = d_{l-1}, ||u||_1 \le B_{x,l-1}} \sum_{j=1}^{d_l}|u^T[W_l(i,j)]_i|)))\label{a0107}\\
&\le&\sup_{\mathbf{d}_{\le l}} (\sup_{c_j \ge 0,\sum_{j=1}^{d_{l}} c_j \le \frac{B}{\lambda}} \sum_{j=1}^{d_l}(\sup_{W_{l}, ||[W_l(i,j)]_i||_p = c_j} (\sup_{u, \text{dim}(u) = d_{l-1}, ||u||_1 \le B_{x,l-1}} |u^T[W_l(i,j)]_i|)))\label{a0108}\\
&=&\sup_{\mathbf{d}_{\le l}} (\sup_{c_j \ge 0,\sum_{j=1}^{d_{l}} c_j \le \frac{B}{\lambda}} \sum_{j=1}^{d_l}(\sup_{v, \text{dim}(v) = d_{l-1}, ||v||_p = c_j} (\sup_{u, \text{dim}(u) = d_{l-1}, ||u||_1 \le B_{x,l-1}} |u^Tv|)))\label{a0109}\\
&\le&\sup_{\mathbf{d}_{\le l}} (\sup_{c_j \ge 0,\sum_{j=1}^{d_{l}} c_j \le \frac{B}{\lambda}} \sum_{j=1}^{d_l}(\sup_{v, \text{dim}(v) = d_{l-1}, ||v||_\infty \le c_j, u, \text{dim}(u) = d_{l-1}, ||u||_1 \le B_{x,l-1}} |u^Tv|))\label{a0110}\\
&\le&\sup_{\mathbf{d}_{\le l}} (\sup_{c_j \ge 0,\sum_{j=1}^{d_{l}} c_j \le \frac{B}{\lambda}} \sum_{j=1}^{d_l} c_jB_{x,l-1})\label{a0111}\\
&\le&\sup_{\mathbf{d}_{\le l}} \frac{BB_{x,l-1}}{\lambda}\label{a0112}\\
&=&\frac{BB_{x,l-1}}{\lambda}\label{a0113}
\end{eqnarray}

A line-by-line explanation of the above is as follows:

\begin{itemize}
\item[\ref{a0102}] Replacing $z_{l}$ by its definition.
\item[\ref{a0103}] Relaxing the conditions on $(\mathbf{d},\mathbf{W})$ by replacing proper $B$-local minimality by two conditions that proper $B$-local minimality implies. The first condition is the induction hypothesis. The second condition follows because $E \le B$ and so specifically $\Omega_{in}(\mathbf{W}) \le B$ and so specifically $\Omega_{in}(W_{l}) \le B$
\item[\ref{a0104}] Breaking up the supremum into three stages. We drop components of $\mathbf{d}$ and $\mathbf{W}$ that are immaterial to the value of the objective of the supremum.
\item[\ref{a0105}] We further relax the innermost $\sup$ by no longer requiring that $x_{l-1}$ be the intermediate output of some neural network but an arbitrary vector of fixed size and limited length. $\mathbf{W}_{< l}$ then becomes immaterial.
\item[\ref{a0106}] Replacing the $\ell_1$ norm by its definition.
\item[\ref{a0107}] We fix the length of each fan-in in the second $\sup$ and add an additional $\sup$ over these lengths.
\item[\ref{a0108}] Jensen's inequality
\item[\ref{a0109}] Simplifying the notation by replacing rows of $W_l$ by vector $v$.
\item[\ref{a0110}] Relaxing the conditions on $v$.
\item[\ref{a0111}] Using an elementary property of norms. 
\item[\ref{a0112}] obvious
\item[\ref{a0113}] obvious
\end{itemize}

And therefore, we may choose $B_{z,l} = \frac{BB_{x,l-1}}{\lambda}$ as required. 

Now consider the other inductive steps. Assuming we have a valid $B_{z,l}$, we have at all proper $B$-local minima $|\sigma'_l(z_l(j))| \le b_{2,l}(B_{z,l})$ because $|z_l(j)| \le ||z_l||_1 \le B_{z,l}$ and hence we can choose $B_{d\sigma, l} = b_{2,l}(B_{z,l})$ as required. Finally, at all proper $B$-local minima, $||x_l||_1 = ||\sigma_l.(z_l)||_1 = \sum_{j=1}^{d_l} |\sigma_l(z_l(j))| \le \sum_{j=1}^{d_l} b_{1,l}(B_{z,l})|z_l(j)| = b_{1,l}(B_{z,l})||z_l||_1 \le b_{1,l}(B_{z,l}) B_{z,l}$ and so we can choose $B_{x,l} = b_{1,l}(B_{z,l}) B_{z,l}$. This completes the proof of claims 1(a)-(c).

{\it Claim 2a}: There exist constants $B_{g,l}$, $0\le l\le L$, such that at all proper $B$-local minima, for all $1 \le n \le N$, for all $0 \le l \le L$, for all $1 \le i \le d_l$, we have $|g_l^{(n)}(i)| \le B_{g,l}$.

{\it Claim 2b}: There exist constants $B_{h,l}$, $1\le l\le L$, such that at all proper $B$-local minima, for all $1 \le n \le N$, for all $1 \le l \le L$, for all $1 \le i \le d_l$, we have $|h_l^{(n)}(i)| \le B_{h,l}$.

Again, we can restrict our attention to a single datapoint and again, we will prove these claims by induction, but going backwards along the flow of the gradient. The starting case is $g_L$. At all proper $B$-local minima, $E \le B$, so specifically $e(x_L, y) \le B$ and therefore we have $||g_L||_\infty = ||\frac{de(x_L,y)}{dx_L}||_\infty \le b_3(B)$ and so specifically $|g_L(i)| \le b_3(B)$ and so we can choose $B_{g,L} = b_3(B)$ as required.

Now we assume we have a valid $B_{g,l}$. At all proper $B$-local minima we have $|h_l(i)| = |\frac{de}{dz_l(i)}| = |\frac{de}{dx_l(i)}\frac{dx_l(i)}{dz_l(i)}| = |g_l(i)||\sigma'(z_l(i))| \le B_{g,l}B_{d\sigma,l}$. Therefore we can choose $B_{h,l} = B_{g,l}B_{d\sigma,l}$ as required.

Finally, assume we have $B_{h,l}$. Then we have $|g_{l-1}(i)| = |\frac{de}{dx_{l-1}(i)}| = |\sum_{j=1}^{d_l} \frac{de}{dz_{l}(j)} \frac{dz_{l}(j)}{dx_{l}(i)}| = |\sum_{j=1}^{d_l} h_l(j)W_l(i,j)| \le \sum_{j=1}^{d_l} |h_l(j)||W_l(i,j)| \le B_{h,l}\sum_{j=1}^{d_l} |W_l(i,j)| \le B_{h,l}\sum_{j=1}^{d_l} ||[W_l(i,j)]_i||_p \le B_{h,l}B$, so we can choose $B_{g,l-1} = B_{h,l}B$ as required.

{\it Claim 3}: There is a constant $B_2$, such that at all proper $B$-local minima, for all $1 \le l \le L$, for all $1 \le j \le d_l$, we have $\sum_{n=1}^N |g_l^{(n)}(j)| \ge B_2$. 

At any proper parameter value, all fan-ins are non-zero. Therefore $\Omega$ is differentiable with respect to $\mathbf{W}$, and therefore $E$ is differentiable with respect to $\mathbf{W}$. Hence, at any proper $B$-local minimum, we have $\nabla_{\mathbf{W}} E = 0$, so in particular for any $l,j$ we have $\frac{dE}{d[W_l(i,j)]_i} = 0$, so $\frac{1}{N} \sum_{n=1}^N \frac{de}{d[W_l(i,j)]_i} = - \frac{d\Omega}{d[W_l(i,j)]_i}$ and so specifically $||\frac{1}{N} \sum_{n=1}^N \frac{de}{d[W_l(i,j)]_i}||_{\frac{p}{p-1}} = ||\frac{d\Omega}{d[W_l(i,j)]_i}||_{\frac{p}{p-1}}$, where $\frac{p}{p-1}$ can take the value $\infty$. Further analyzing the right hand side, we have $||\frac{d\Omega}{d[W_l(i,j)]_i}||_{\frac{p}{p-1}} = ||\frac{d( \lambda ||[W_l(i,j)]_i||_p)}{d[W_l(i,j)]_i}||_{\frac{p}{p-1}} = \lambda$. Therefore, at all proper $B$-local minima we have:

\begin{eqnarray}
&& \lambda \\
&=& \frac{1}{N} ||\sum_n \frac{de}{d[W_l(i,j)]_i}||_{\frac{p}{p-1}}\\
&=& \frac{1}{N} ||\sum_n \frac{de}{dx_l(j)}\frac{dx_l(j)}{d[W_l(i,j)]_i}||_{\frac{p}{p-1}}\\
&=& \frac{1}{N} ||\sum_n g_l(j)\frac{d x_l(j)}{d z_l(j)}\frac{d z_l(j)}{d [W_l(i,j)]_i}||_{\frac{p}{p-1}}\\
&=& \frac{1}{N} ||\sum_n g_l(j)\sigma_l'(z_l(j))x_{l-1}||_{\frac{p}{p-1}}\\
&\le & \frac{1}{N} ||\sum_n g_l(j)\sigma_l'(z_l(j))x_{l-1}||_{1}\\
&=& \frac{1}{N} \sum_{i=1}^{d_{l-1}} |\sum_n g_l(j)\sigma_l'(z_l(j))x_{l-1}(i)|\\
&\le& \frac{1}{N} \sum_{i=1}^{d_{l-1}} \sum_n |g_l(j)||\sigma_l'(z_l(j))||x_{l-1}(i)|\\
&=& \frac{1}{N} \sum_n |g_l(j)||\sigma_l'(z_l(j))| ||x_{l-1}||_1\\
&\le& \frac{1}{N} \sum_n |g_l(j)|B_{d\sigma,l}B_{x,l-1}
\end{eqnarray}

Hence, we can choose $B_2 = \max_{1 \le l \le L} \frac{\lambda N}{B_{d\sigma,l}B_{x,l-1}}$ as required. Note that in the above equations, we have omitted all $^{(n)}$ superscripts for brevity.

{\it Claim 4}: There exist constants $D_l$, $1 \le l \le L$ such that at all proper $B$-local minima, for all $1 \le l \le L$, we have $d_l \le D_l$.

Note that this claim is tantamount to proving the hypothesis of the Lemma.

We will prove this by induction going backwards. Since $d_L$ is fixed, we can simply pick $D_L = d_L$. Now assume we have a valid $D_{l+1}$. Then at all proper $B$-local minima we have

\begin{eqnarray}
&& d_lB_2 \\
&\le& \sum_{n=1}^{N} \sum_{i=1}^{d_l} |g_l^{(n)}(i)| \\
&=& \sum_{n=1}^{N} \sum_{i=1}^{d_l} |\sum_{j=1}^{d_{l+1}} W_{l+1}(i,j)h_{l+1}^{(n)}(j)| \\
&\le& \sum_{n=1}^{N} \sum_{i=1}^{d_l} \sum_{j=1}^{d_{l+1}} |W_{l+1}(i,j)h_{l+1}^{(n)}(j)|
\end{eqnarray}

Therefore, by the box principle, there exists an $n'$ and $j'$ such that $\sum_{i=1}^{d_l} |W_{l+1}(i,j')h_{l+1}^{(n')}(j')| \ge \frac{d_lB_2}{d_{l+1}N}$. So further, we have

\begin{eqnarray}
&&\frac{d_lB_2}{d_{l+1}N}\\
&\le&\sum_{i=1}^{d_l} |W_{l+1}(i,j')h_{l+1}^{(n')}(j')|\\
&\le&|h_{l+1}^{(n')}(j')| \sum_{i=1}^{d_l} |W_{l+1}(i,j')|\\
&\le& B_{h,l+1} ||[W_{l+1}(i,j')]_i||_1\\
&\le& B_{h,l+1} d_l^{\frac{p-1}{p}} ||[W_{l+1}(i,j')]_i||_p\\
&\le& B_{h,l+1} d_l^{\frac{p-1}{p}} B
\end{eqnarray}

And therefore, $d_l \le (\frac{BB_{h,l+1}Nd_{l+1}}{B_2})^p$ and so we can choose $D_l = (\frac{BB_{h,l+1}ND_{l+1}}{B_2})^p$, which completes the proof.

\end{proof}

For brevity, we will only give a summary of the proof of lemma \ref{meat2}, as it is very similar to the proof of lemma \ref{meat1}. 

\begin{proof}[Sketch of proof of lemma \ref{meat2}]

As in the previous proof, we consider $B$ fixed.

{\it Claim 1a}: There exist constants $B_{x,l}$, $0\le l\le L$, such that at all proper $B$-local minima, for all $1 \le n \le N$, for all $0 \le l \le L$, for all $1 \le j \le d_l$, we have $|x_l^{(n)}(j)| \le B_{x,l}$.

{\it Claim 1b}: There exist constants $B_{z,l}$, $1\le l\le L$, such that at all proper $B$-local minima, for all $1 \le n \le N$, for all $1 \le l \le L$, for all $1 \le j \le d_l$, we have $|z_l^{(n)}(j)| \le B_{z,l}$.

{\it Claim 1c}: There exist constants $B_{d\sigma,l}$, $1\le l\le L$, such that at all proper $B$-local minima, for all $1 \le n \le N$, for all $1 \le l \le L$, for all $1 \le j \le d_l$, we have $|\sigma'(z_l^{(n)}(j))| \le B_{d\sigma,l}$.

As in the previous proof, we proceed by induction along the order of feed-forward execution of the neural network. However, we use the arguments we used for Claims 2(a)-(b) in the previous proof. This is because the fan-out regularizer ``appears'' like a fan-in regularizer when the direction of signal flow is reversed.

{\it Claim 2a}: There exist constants $B_{g,l}$, $0\le l\le L$, such that at all proper $B$-local minima, for all $1 \le n \le N$, for all $0 \le l \le L$, we have $||g_l^{(n)}||_1 \le B_{g,l}$.

{\it Claim 2b}: There exist constants $B_{h,l}$, $1\le l\le L$, such that at all proper $B$-local minima, for all $1 \le n \le N$, for all $1 \le l \le L$, we have $||h_l^{(n)}||_1 \le B_{h,l}$.

As in the previous proof, we proceed by induction along the flow of the gradient. However, we use the arguments we used for Claims 1(a)-(c) in the previous proof.

{\it Claim 3}: There is a constant $B_2$, such that at all proper $B$-local minima, for all $0 \le l \le L-1$, for all $1 \le j \le d_l$, we have $\sum_{n=1}^N |x_l^{(n)}(j)| \ge B_2$. 

{\it Claim 4}: There exist constants $D_l$, $0 \le l \le L-1$ such that at all proper $B$-local minima, for all $0 \le l \le L-1$, we have $d_l \le D_l$.

The arguments mirror those of Claim 3 and 4 from the previous proof, but with the role of activation and gradient reversed. Also, here, Claim 4 is proved by induction along the order of feed-forward execution.

\end{proof}

\begin{lemma} \label{differentiable}
Under the conditions of theorem \ref{mainTheorem}, for all $B$, the set of values of $\mathbf{d}$ for which there exist proper $B$-local minima is bounded.
\end{lemma}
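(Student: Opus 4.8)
\emph{Proof plan.} The statement is identical to that of Lemmas~\ref{meat1} and~\ref{meat2} with the extra hypothesis ``$\sigma_l$ differentiable everywhere'' deleted, so the plan is to re-run those two proofs essentially verbatim, systematically replacing the ordinary derivative $\sigma_l'$ by one-sided derivatives $\sigma_l^\leftarrow,\sigma_l^\rightarrow$ and the first-order optimality condition by its directional-derivative analogue. Two observations make this painless. First, Claims~1(a--c) and~2(a--b) of the proof of Lemma~\ref{meat1} (and their mirror images in Lemma~\ref{meat2}) only ever use the \emph{bound} $|\sigma_l'(z_l^{(n)}(j))|\le B_{d\sigma,l}$; since the hypotheses of Theorem~\ref{mainTheorem} furnish $|\sigma_l^\leftarrow|,|\sigma_l^\rightarrow|\le b_{2,l}(\cdot)$, we may simply set $B_{d\sigma,l}:=b_{2,l}(B_{z,l})$ and every one of those estimates survives word for word (with ``$\sigma_l'$'' reread as ``whichever one-sided derivative of $\sigma_l$ is relevant''). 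Second, $E(\mathbf{d},\cdot)$ is directionally differentiable everywhere — it is a finite composition of affine maps, the smooth error function $e$, and the $\sigma_l$, and one-sided directional differentiability is preserved by such compositions and by finite sums and products — so at a proper $B$-local minimum $(\mathbf{d},\mathbf{W})$ we have $\partial^+_v E\ge 0$ for every direction $v$, where $\partial^+_v$ denotes the one-sided directional derivative. This is the replacement for the identity $\nabla_{\mathbf{W}}E=0$ that was the one genuine use of differentiability, namely in the proof of Claim~3.

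The new version of Claim~3 goes as follows. Fix a unit $(l,j)$ and write $w=[W_l(i,j)]_i$; by properness $w\ne 0$, so $\|w\|_p$ is smooth at $w$ and, restricting $v$ to perturb only $w$, $\partial^+_v\Omega=\lambda\langle \nabla\|w\|_p,v\rangle$ exactly. Let $\hat g_l^{(n)}(j)$ be the larger of the two one-sided sensitivities of $e(f(\mathbf{W},x^{(n)}),y^{(n)})$ to $x_l^{(n)}(j)$ — the quantity that played the role of $|g_l^{(n)}(j)|$ before, and which is bounded by a constant by the same backward induction as in Claim~2, now using $b_{2,l}$ in place of $|\sigma_l'|$. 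The one-sided chain rule through the single nonlinearity $\sigma_l$ and through the affine map $w\mapsto x_{l-1}^{(n)\top}w$ gives $|\partial^+_v e^{(n)}|\le B_{d\sigma,l}\,\hat g_l^{(n)}(j)\,|x_{l-1}^{(n)\top}v|\le B_{d\sigma,l}B_{x,l-1}\hat g_l^{(n)}(j)\|v\|_\infty$. Now take $v$ to be the componentwise sign-flip of $g:=\nabla\|w\|_p$, so that $\|v\|_\infty=1$ and $-\lambda\langle g,v\rangle=\lambda\|g\|_1\ge\lambda\|g\|_{p/(p-1)}=\lambda$; plugging into $\partial^+_v E=\partial^+_v(\tfrac1N\sum_n e^{(n)})+\partial^+_v\Omega\ge 0$ yields $\tfrac{1}{N}B_{d\sigma,l}B_{x,l-1}\sum_n\hat g_l^{(n)}(j)\ge\lambda$, i.e.\ $\sum_n\hat g_l^{(n)}(j)\ge B_2:=\lambda N/(B_{d\sigma,l}B_{x,l-1})$. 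Claim~4 is then unchanged: the box-principle bound on $\sum_n\sum_i\sum_j|W_{l+1}(i,j)|\,|h_{l+1}^{(n)}(j)|$ uses only upper bounds and yields $d_l\le D_l$ for explicit $D_l$; the fan-out case is the same reversal as in Lemma~\ref{meat2}.

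The only real work is bookkeeping: making precise the ``one-sided chain rule through the network'', i.e.\ expressing $\partial^+_v e^{(n)}$ as a product of one-sided $\sigma$-derivatives along a path and verifying that the correct left/right choice appears at each intermediate node according to the sign of the perturbation there. This is fiddly but harmless, since at every step we immediately discard the sign and bound the factor by $b_{2,l}$ (or $b_3$ at the output), so the computation collapses onto the one already carried out in the differentiable case. A minor auxiliary point — that $E$ is continuous, so that a local minimum of value $\le B$ has a neighbourhood on which $E\le B$, which is used implicitly throughout — follows from one-sided differentiability (hence continuity) of the $\sigma_l$ together with continuity of $e$.
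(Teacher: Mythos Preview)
Your plan is the same as the paper's --- re-run Lemmas~\ref{meat1}/\ref{meat2} with one-sided derivatives and the first-order condition $\partial^+_vE\ge 0$ in place of $\nabla E=0$ --- and your Claim~3 is actually cleaner than the paper's. The paper formalizes the ``one-sided chain rule'' by introducing \emph{signatures} $\mathbf S$ (a binary choice of left/right derivative at every unit), defines the corresponding \emph{linearized network} $f^{[\mathbf S,\mathbf W,x]}$, and proves a ``Claim~0'': for any perturbation direction there is a signature so that the directional derivative of $e$ equals the ordinary gradient of the linearized network dotted with the perturbation. Claims~2 then bound the linearized gradients $g_l^{[\mathbf S]},h_l^{[\mathbf S]}$ uniformly over all $\mathbf S$; Claim~3 lower-bounds $\sum_n\sum_{\mathbf S_{\ge l+1}}|g_l^{(n)[\mathbf S]}(j)|$; and Claim~4 applies the box principle over triples $(n,j,\mathbf S)$, picking up an extra factor $2^{D_{l+1}+\cdots+D_L}$ in the bound for $d_l$. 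Your single-direction argument (take $v=-\operatorname{sign}(\nabla\|w\|_p)$) bypasses the sum over signatures in Claim~3 and hence that exponential factor, which is a genuine simplification.

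There is, however, one gap you should close. You define $\hat g_l^{(n)}(j)$ as the larger of the two one-sided sensitivities of $e$ to the \emph{scalar} $x_l^{(n)}(j)$, and then assert that Claim~4 is ``unchanged'', i.e.\ that $\hat g_l^{(n)}(i)\le\sum_j|W_{l+1}(i,j)|\,|h_{l+1}^{(n)}(j)|$ for some bounded $h_{l+1}^{(n)}(j)$. If $h_{l+1}^{(n)}(j)$ is meant to be the analogous coordinate-wise one-sided derivative, this inequality is \emph{false} in general: a one-sided derivative of a non-differentiable function in a multi-coordinate direction does not decompose into coordinate-wise one-sided derivatives. (Concretely, take $H(x_1,x_2)=(x_1+x_2)^+-x_1^+-x_2^+$ at the origin: both coordinate-wise one-sided derivatives vanish, yet $\partial^+_{(1,-1)}H=-1$. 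Such $H$ arises in a two-layer ReLU net.) Perturbing $x_l(i)$ pushes \emph{all} of $z_{l+1}$ simultaneously, so the downstream sensitivity you need is the directional derivative of $H_{l+1}$ in the row-direction $[W_{l+1}(i,\cdot)]$, not a sum of scalar one-sided derivatives. The fix is exactly what your last paragraph gestures at: the directional derivative of $e$ in any direction equals the ordinary gradient of \emph{one particular} linearization, so take $h_{l+1}^{(n)}(j):=\sup_{\mathbf S}|h_{l+1}^{(n)[\mathbf S]}(j)|$ (equivalently, bound $\hat g_l^{(n)}(i)$ by $\|u\|_1\cdot\sup_{\|v\|_1\le 1}|\partial^+_vH_{l+1}|$). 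That supremum \emph{does} satisfy the backward recursion of Claim~2, since for each fixed $\mathbf S$ the linearized network is differentiable and its derivatives are bounded by the same $b_{2,l}$'s. With this definition your Claim~4 goes through verbatim --- and, since you never summed over signatures in Claim~3, without the paper's combinatorial overhead.
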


This is the stronger version of the previous lemmas where we only use directional differentiability of the $\sigma_l$ instead of actual differentiability. The proof is a rather tedious extension of the previous two proofs and not very instructive, which is why we broke out the differential case as its own lemmas. Following this lemma, we immediately prove the main theorem.

\begin{proof}

We will describe how to amend the proof of lemma \ref{meat1}. The proof of lemma \ref{meat2} can be amended similarly.

First, we define a {\it signature} $\mathbf{S}$ with dimensionality $\mathbf{d}$ as a binary sequence of vectors. Then, for all $\mathbf{d}$, $x' \in \mathbb{R}^{d_0}$, $\mathbf{W}'$ of dimensionality $\mathbf{d}$ and $\mathbf{S}$ of dimensionality $\mathbf{d}$, we define a {\it linearized neural network} that is {\it linearized at point $x'$ with weights $\mathbf{W}'$ and signature $\mathbf{S}$} $f^{[\mathbf{S}, \mathbf{W}', x']}$ as follows: First, obtain $x'_l$ and $z'_l$ for $1 \le l \le L$ by evaluating $f(\mathbf{W}', x')$ as usual. Then, for each $\sigma_l$ used in $f$, define a vector of functions $\sigma_l^{[\mathbf{S}, \mathbf{W}', x']}$, where each element is a linear function with $\sigma_l^{[\mathbf{S}, \mathbf{W}', x']}(j)(z'_l(j)) = \sigma_l(z'_l(j))$ and with $\frac{d\sigma_l^{[\mathbf{S}, \mathbf{W}', x']}(j)(s)}{ds} = \sigma_l^\leftarrow(z'_l(j))$ if $S_l(j) = 0$ and with $\frac{d\sigma_l^{[\mathbf{S}, \mathbf{W}', x']}(j)(s)}{ds} = \sigma_l^\rightarrow(z'_l(j))$ if $S_l(j) = 1$. Finally, we obtain $f^{[\mathbf{S}, \mathbf{W}', x']}$ from $f$ by, for each $1 \le l \le L$, replacing $\sigma_l$ that is applied elementwise with $\sigma_l^{[\mathbf{S}, \mathbf{W}', x']}$ where each component is applied to the respective component of $z_l$.

In plain language, we linearize a neural network at a point by evaluating it at that point and replacing each nonlinearity by a straight line as indicated by the value and directional derivative of that nonlinearity wherever it is evaluated, where the direction of the derivative used is governed by $\mathbf{S}$.

Similarly, define a {\it partially linearized neural network} $f^{[\mathbf{S}_{\ge l}, \mathbf{W}', x']}$ in the same fashion, except only layers $l$ and above are linearized. Finally, define a partially linearized neural network $f^{[\mathbf{S}_{\ge l, i}, \mathbf{W}', x']}$ in the same fashion, except only layers $l$ and above as well as unit $i$ in layer $l-1$ are linearized.

$e$ is composed of functions that are differentiable or directionally differentiable with respect to $\mathbf{W}$, so $e$ itself is directionally differentiable with respect to $\mathbf{W}$. Specifically, let us analyze the directional derivative of $e$ with respect to some perturbation of $W_{L-1}$.

\begin{eqnarray}
&& \nabla_{\delta W_{L-1}} e(f(\mathbf{W}, x),y)\label{a0201}\\
&=& \nabla_{\delta W_{L-1}} e(\sigma_L.(\sigma_{L-1}.(x_{L-2}W_{L-1})W_L),y)\label{a0202}\\
&=& \nabla_{\delta x_L = \nabla_{\delta W_{L-1}} \sigma_L.(\sigma_{L-1}.(x_{L-2}W_{L-1})W_L) } e(x_L,y)\label{a0203}\\
&=& \frac{de}{dx_L} \nabla_{\delta W_{L-1}} (\sigma_L.(\sigma_{L-1}.(x_{L-2}W_{L-1})W_L))^T\label{a0204}\\
&=& \sum_{j=1}^{d_L}\frac{de}{dx_L}(j) \nabla_{\delta W_{L-1}} (\sigma_L.(\sigma_{L-1}.(x_{L-2}W_{L-1})W_L))(j)\label{a0204-2}\\
&=&  \sum_{j=1}^{d_L}\frac{de}{dx_L}(j) \nabla_{\delta z_L = \nabla_{\delta W_{L-1}} [\sigma_{L-1}.(x_{L-2}W_{L-1})W_L]} (\sigma_L.(z_L))(j)\label{a0205}\\
&=&  \sum_{j=1}^{d_L}\frac{de}{dx_L}(j) \sigma_L^*(z_L(j)) \nabla_{\delta W_{L-1}} (\sigma_{L-1}.(x_{L-2}W_{L-1})W_L)(j)\label{a0206}\\
&=&  \sum_{j=1}^{d_L}\frac{de}{dx_L}(j) \sigma_L^*(z_L(j)) \sum_{i=1}^{d_{L-1}}W_L(i,j)\nabla_{\delta W_{L-1}} (\sigma_{L-1}.(x_{L-2}W_{L-1}))(i)\label{a0207}\\
&=& (\frac{de}{dx_L} .* \sigma_L^*.(z_L)) W_L^T \nabla_{\delta W_{L-1}} (\sigma_{L-1}.(x_{L-2}W_{L-1}))^T\label{a0208}\\
&=& \sum_{j=1}^{d_{L-1}}((\frac{de}{dx_L} .* \sigma_L^*.(z_L)) W_L^T)(j) \sigma_{L-1}^*(z_{L-1}(j)) \nabla_{\delta W_{L-1}} (x_{L-2}W_{L-1})(j)\label{a0209}\\
&=& \sum_{j=1}^{d_{L-1}}((\frac{de}{dx_L} .* \sigma_L^*.(z_L)) W_L^T)(j) \sigma_{L-1}^*(z_{L-1}(j)) \sum_{i=1}^{d_{L-2}}x_{L-2}(i)\nabla_{\delta W_{L-1}}W_{L-1}(i,j)\label{a0210}\\
&=& \sum_{j=1}^{d_{L-1}}((\frac{de}{dx_L} .* \sigma_L^*.(z_L)) W_L^T)(j) \sigma_{L-1}^*(z_{L-1}(j)) \sum_{i=1}^{d_{L-2}}x_{L-2}(i)\delta W_{L-1}(i,j)\label{a0211}\\
&=& ((((\frac{de}{dx_L} .* \sigma_L^*.(z_L)) W_L^T) .* \sigma_{L-1}^*.(z_{L-1}))^T x_{L-2}) . \delta W_{L-1}(i,j)\label{a0212}
\end{eqnarray}

Here, a $^*$ superscript is a ``wildcard'' that can stand for a left or a right derivative. When combined with the $.()$ elementwise operation, it can mean a different derivative (left or right) for each element.

We use the chain rule for directional derivatives (lines \ref{a0203}, \ref{a0205} and \ref{a0209}), the linearity of the directional derivative (lines \ref{a0207} and \ref{a0210}), the fact that the directional derivative of a differentiable function is the dot product of its gradient with the perturbation (line \ref{a0204}), and the fact that the directional derivative of a left- and right-differentiable scalar function is either the product of its left derivative with the perturbation or the product of its right derivative with the perturbation (lines \ref{a0206} and \ref{a0209}).

We notice that the final expression in line \ref{a0212} is the same expression we would obtain if the $\sigma_l$ were differentiable, except with a $^*$ instead of a $'$ superscript. Now the linearized neural networks come into play. We can choose a signature $S$ that matches the wildcards in the above directional derivative and get $\nabla_{\delta W_{L-1}} e(f(\mathbf{W}, x),y) = \frac{de(f^{[\mathbf{S}, \mathbf{W}, x]}(\mathbf{W}, x),y)}{dW_{L-1}} . \delta W_{L-1}$, because the forward evaluation of $f^{[\mathbf{S}, \mathbf{W}, x]}$ and $f$ are identical at $x$ and the backward evaluation picks out the correct left and right derivatives. In fact, it is sufficient to choose a partially linearized network with signature $\mathbf{S}_{\ge L-1}$ to achieve the above identity.

So far, we have investigated the directional derivative with respect to $\delta W_{L-1}$. However, the same arguments hold for all $1 \le l \le L$. We can expand $\nabla_{\delta W_{l}} e(f(\mathbf{W}, x),y)$ in the same way, except we repeat the transformation from line \ref{a0204} to line \ref{a0208} $L-l$ times. Hence, we have what we will call claim 0.

{\it Claim 0}: For all $(\mathbf{d}, \mathbf{W})$, for all $x \in \mathbb{R}^{d_0}$, for all $1\le l \le L$, for all $\delta W_l$, we can choose a signature $\mathbf{S}$ or a partial signature $\mathbf{S}_{\ge l}$, such that $\nabla_{\delta W_{l}} e(f(\mathbf{W}, x),y) = \frac{de(f^{[\mathbf{S}, \mathbf{W}, x]}(\mathbf{W}, x),y)}{dW_{l}} . \delta W_{l}$. 

Now, we refer back to the proof of lemma \ref{meat1}. Claims 1(a)-(c) hold as before, except in Claim 1(c) we replace $|\sigma'(z_l^{(n)}(j))| \le B_{d\sigma, l}$ with $|\sigma^\leftarrow(z_l^{(n)}(j))| \le B_{d\sigma, l}$ and $|\sigma^\rightarrow(z_l^{(n)}(j))| \le B_{d\sigma, l}$. Further, note that claims 1(a)-(c) also hold for neural networks that are linearized at the respective $B$-local minimum and datapoint at which they are evaluated.

Claims 2(a)-(b) are changed as follows:

{\it Claim 2a}: There exist constants $B_{g,l}$, $0\le l\le L$, such that at all proper $B$-local minima, for all $1 \le n \le N$, for all $0 \le l \le L$, for all $1 \le i \le d_l$, for all signatures $\mathbf{S}$ or partial signatures $\mathbf{S}_{\ge l+1}$ of matching dimensionality we have $|g_l^{(n)[\mathbf{S},\mathbf{W},x^{(n)}]}(i)| \le B_{g,l}$.

{\it Claim 2b}: There exist constants $B_{h,l}$, $1\le l\le L$, such that at all proper $B$-local minima, for all $1 \le n \le N$, for all $1 \le l \le L$, for all $1 \le i \le d_l$, for all signatures $\mathbf{S}$ or partial signatures $\mathbf{S}_{\ge l+1, i}$ of matching dimensionality we have $|h_l^{(n)[\mathbf{S},\mathbf{W},x^{(n)}]}(i)| \le B_{h,l}$. 

The proof is as before, where derivatives of the $\sigma_l$ are again replaced by a left or right derivative as indicated by the signature.

Claim 3 and its proof change somewhat.

{\it Claim 3}: There exists a constant $B_2$, such that at all proper $B$-local minima, for all $1 \le l \le L$, for all $1 \le j \le d_l$, we have $\sum_{n=1}^N \sum_{\mathbf{S}_{\ge l+1}} |g_l^{(n)[\mathbf{S}_{\ge l+1},\mathbf{W},x^{(n)}]}(j)| \ge B_2$, where $\sum_{\mathbf{S}_{\ge l+1}}$ is the sum over all partial signatures.

At all proper $B$-local minima, we have for all $l$, $i$ and $j$:

\begin{eqnarray}
&&0\\
&\le& \nabla_{\delta W_l(i,j)} E \\
&=& \nabla_{\delta W_l(i,j)}(\frac{1}{N} \sum_{n=1}^N e(f(\mathbf{W},x^{(n)}), y^{(n)}) + \Omega(\mathbf{W})) \\
&=& \frac{1}{N} \sum_{n=1}^N \nabla_{\delta W_l(i,j)} e(f(\mathbf{W},x^{(n)}), y^{(n)}) + \nabla_{\delta W_l(i,j)} \Omega(W_l) \\
&=& \frac{1}{N} \sum_{n=1}^N \frac{de(f^{[\mathbf{S}^{i,j,n}_{\ge l},\mathbf{W},x^{(n)}]}(\mathbf{W},x^{(n)}), y^{(n)})}{dW_l(i,j)}\delta W(i,j) + \frac{d\Omega(W_l)}{dW_l(i,j)}\delta W(i,j) \label{a0301}
\end{eqnarray}

Here, $\nabla_{\delta W_l(i,j)}$ stands for the directional derivative with respect to a change in the scalar value $W_l(i,j)$. Because this is a special case of a directional derivative with respect to $\delta W_l$, we can use Claim 0 to obtain line \ref{a0301}. Note that to use Claim 0, we have to choose a different partial signature for each value of $i$, $j$, and $n$, which we indicate by superscript. Specifically, we now choose $\delta W_l(i,j) = -1$ and corresponding signatures. Then, for all $l$, $i$ and $j$ we have $0 \le \frac{1}{N} \sum_{n=1}^N \frac{de(f^{[\mathbf{S}^{i,j,n}_{\ge l},\mathbf{W},x^{(n)}]}(\mathbf{W},x^{(n)}), y^{(n)})}{dW_l(i,j)}(-1) + \frac{d\Omega(W_l)}{dW_l(i,j)}(-1)$, and hence $ - \frac{1}{N} \sum_{n=1}^N \frac{de(f^{[\mathbf{S}^{i,j,n}_{\ge l},\mathbf{W},x^{(n)}]}(\mathbf{W},x^{(n)}), y^{(n)})}{dW_l(i,j)} \ge \frac{d\Omega(W_l)}{dW_l(i,j)}$. Since $\frac{d\Omega(W_l)}{dW_l(i,j)} \ge 0$, we have $|\frac{1}{N} \sum_{n=1}^N \frac{de(f^{[\mathbf{S}^{i,j,n}_{\ge l},\mathbf{W},x^{(n)}]}(\mathbf{W},x^{(n)}), y^{(n)})}{dW_l(i,j)}| \ge |\frac{d\Omega(W_l)}{dW_l(i,j)}|$. So in particular for all $l$ and $j$ we have $||\frac{1}{N} [\sum_{n=1}^N \frac{de(f^{[\mathbf{S}^{i,j,n}_{\ge l},\mathbf{W},x^{(n)}]}(\mathbf{W},x^{(n)}), y^{(n)})}{dW_l(i,j)}]_i||_{\frac{p}{p-1}} \ge ||[\frac{d\Omega(W_l)}{dW_l(i,j)}]_i||_{\frac{p}{p-1}} = \lambda$. So further we have:

\begin{eqnarray}
&&\lambda\\
&\le& ||\frac{1}{N} [\sum_{n=1}^N \frac{de(f^{[\mathbf{S}^{i,j,n}_{\ge l},\mathbf{W},x^{(n)}]}(\mathbf{W},x^{(n)}), y^{(n)})}{dW_l(i,j)}]_i||_{\frac{p}{p-1}}\label{a0401}\\
&=& \frac{1}{N}||[\sum_{n=1}^N g_l^{(n)[\mathbf{S}^{i,j,n}_{\ge l},\mathbf{W},x^{(n)}]}(j)(\sigma_l^{[\mathbf{S}^{i,j,n}_{\ge l},\mathbf{W},x^{(n)}]})'(z_l^{(n)}(j))x_{l-1}^{(n)}(i)]_i||_{\frac{p}{p-1}}\label{a0402}\\
&\le& \frac{1}{N}||[\sum_{n=1}^N |g_l^{(n)[\mathbf{S}^{i,j,n}_{\ge l+1},\mathbf{W},x^{(n)}]}(j)||(\sigma_l^{[\mathbf{S}^{i,j,n}_{\ge l},\mathbf{W},x^{(n)}]})'(z_l^{(n)}(j))||x_{l-1}^{(n)}(i)|]_i||_{\frac{p}{p-1}}\label{a0403}\\
&\le& \frac{1}{N} B_{d\sigma, l} ||[\sum_{n=1}^N |g_l^{(n)[\mathbf{S}^{i,j,n}_{\ge l+1},\mathbf{W},x^{(n)}]}(j)||x_{l-1}^{(n)}(i)|]_i||_{\frac{p}{p-1}}\label{a0404}\\
&\le& \frac{1}{N} B_{d\sigma, l} ||[\sum_{n=1}^N \sum_{\mathbf{S}_{\ge l+1}} |g_l^{(n)[\mathbf{S}_{\ge l+1},\mathbf{W},x^{(n)}]}(j)||x_{l-1}^{(n)}(i)|]_i||_{\frac{p}{p-1}}\label{a0405}\\
&\le& \frac{1}{N} B_{d\sigma, l} ||[\sum_{n=1}^N \sum_{\mathbf{S}_{\ge l+1}} |g_l^{(n)[\mathbf{S}_{\ge l+1},\mathbf{W},x^{(n)}]}(j)||x_{l-1}^{(n)}(i)|]_i||_1\label{a0406}\\
&=& \frac{1}{N} B_{d\sigma, l} \sum_{i=1}^{d_{l-1}} \sum_{n=1}^N \sum_{\mathbf{S}_{\ge l+1}} |g_l^{(n)[\mathbf{S}_{\ge l+1},\mathbf{W},x^{(n)}]}(j)||x_{l-1}^{(n)}(i)|\label{a0407}\\
&\le& \frac{1}{N} B_{d\sigma, l} B_{x,l-1} \sum_{n=1}^N \sum_{\mathbf{S}_{\ge l+1}} |g_l^{(n)[\mathbf{S}_{\ge l+1},\mathbf{W},x^{(n)}]}(j)|\label{a0408}
\end{eqnarray}

At line \ref{a0402}, we use the chain rule. Note that $x$ and $z$ do not need the $[\mathbf{S}^{i,j,n}_{\ge l},\mathbf{W},x^{(n)}]$ superscript because the forward evaluation of $f(\mathbf{W}, x^{(n)})$ and $f^{[\mathbf{S}^{i,j,n}_{\ge l},\mathbf{W},x^{(n)}]}(\mathbf{W}, x^{(n)})$ are equivalent. So, we can set $B_2 = \max_{1 \le l \le L} \frac{\lambda N}{B_{d\sigma, l}B_{x,l-1}}$, as required.

Finally, claim 4 is the same as in Lemma \ref{meat1}. The only difference in the proof is that when we invoke the box principle, we choose a specific signature $\mathbf{S}'_{\ge l+1}$ in addition to $n'$ and $j'$ such that $\sum_{i=1}^{d_l} |W_{l+1}(i,j')h_{l+1}^{(n')[\mathbf{S}'_{\ge l+1}, \mathbf{W}, x^{(n)}]}(j')| \ge \frac{d_lB_2}{d_{l+1}N2^{D_{l+1} + .. + D_L}}$. This is possible because of the induction hypothesis, which bounds the number of partial signatures $\mathbf{S}_{\ge l+1}$ by $2^{D_{l+1} + .. + D_L}$. Hence we can set $D_l = (\frac{BB_{h,l+1}ND_{l+1}2^{D_{l+1} + .. + D_L}}{B_2})^p$, which completes the proof.

\end{proof}

\begin{proof}[Proof of theorem \ref{mainTheorem}]
Clearly, $E$ is bounded below by zero. Therefore, it has a greatest lower bound, which we call $B$. Denote $(t, t, .., t)$ by $\mathbf{d}_t$. If $\mathbf{d}$ is assumed to be fixed at $\mathbf{d}_t$, $E$ has a global minimum by lemma \ref{finite}. Let $\mathbf{W}_t$ denote one such global minimum. Let $E_t$ denote the value of $E$ at $(\mathbf{d}_t, \mathbf{W}_t)$.

Now let $\mathbf{d}'$ and $\mathbf{d}''$ be two arbitrary values of $\mathbf{d}$ with $d'_l \ge d''_l$ for all $0 \le l \le L$. (Denote such a relation by $\mathbf{d}' \ge \mathbf{d}''$.) Then, any value that $E$ can attain with $\mathbf{d} = \mathbf{d}''$ it can attain with $\mathbf{d} = \mathbf{d}'$ because we can change any $\mathbf{d}''$-dimensional value of $\mathbf{W}$ into a $\mathbf{d}'$-dimensional value by adding $d'_l - d''_l$ units with zero fan-in and fan-out to each layer without changing $E$. In particular, this implies that $(E)_t$ is a decreasing sequence because $\mathbf{d}_{t+1} \ge \mathbf{d}_t$. Since it is also bounded below by $B$, it converges. Call its limit $C$. 

Assume $C > B$. Then there exists some $(\mathbf{d}', \mathbf{W}')$ with $E(\mathbf{d}',\mathbf{W}') < C$. However, any value that $E$ can attain with $\mathbf{d} = \mathbf{d}'$ it can attain with $\mathbf{d} = \mathbf{d}_{t'}$ where $t' = \max_l d'_l$, because $\mathbf{d}_{t'} \ge \mathbf{d}'$. Therefore $C > E(\mathbf{d}',\mathbf{W}') \ge E_{t'} \ge C$. Contradiction. Therefore, $C = B$.

Now assume that for some $t$, $\mathbf{W}_t$ has a unit that has zero fan-in but not zero fan-out, or vice versa. Then by setting the non-zero fan to zero, the output of $f$ is unchanged for all $x \in \mathbb{R}^{d_0}$ and the value of $\Omega$ is reduced. Therefore, we reduce $E$, which contradicts the fact that $(\mathbf{d}_t, \mathbf{W}_t)$ is a global minimum of $E$ when $\mathbf{d}$ is fixed to $\mathbf{d}_t$. Therefore, all units in $\mathbf{W}_t$ that have zero fan-in also have zero fan-out, and vice versa. 

Let $\mathbf{d}_t^{\text{proper}}$ be the proper dimensionality of $\mathbf{W}_t$ and $\mathbf{W}_t^{\text{proper}}$ be the result of removing all units with zero fan-in or fan-out from $\mathbf{W}_t$. Indeed, as we have shown, all units removed had both zero fan-in and fan-out. Assume $(\mathbf{d}_t^{\text{proper}}, \mathbf{W}_t^{\text{proper}})$ is not a local minimum of $E$. Then there exists a $\mathbf{W}'$ of dimensionality $\mathbf{d}_t^{\text{proper}}$ with $E(\mathbf{d}_t^{\text{proper}}, \mathbf{W}') < E(\mathbf{d}_t^{\text{proper}}, \mathbf{W}_t^{\text{proper}})$. When we add the zero units that were removed from $\mathbf{W}_t$ to obtain $\mathbf{W}_t^{\text{proper}}$ back into $\mathbf{W}'$, we obtain another weight parameter value we call $\mathbf{W}''$. Since $E$ is invariant under the addition and removal of units with both zero fan-in and zero fan-out, we have both $E(\mathbf{d}_t^{\text{proper}}, \mathbf{W}') = E(\mathbf{d}_t, \mathbf{W}'')$ and $E(\mathbf{d}_t^{\text{proper}}, \mathbf{W}_t^{\text{proper}}) = E(\mathbf{d}_t, \mathbf{W}_t)$. Therefore, we have $E(\mathbf{d}_t, \mathbf{W}'') < E(\mathbf{d}_t, \mathbf{W}_t)$, which contradicts that $\mathbf{W}_t$ is a global minimum of $E$ when $\mathbf{d}$ is fixed to $\mathbf{d}_t$. Therefore, $(\mathbf{d}_t^{\text{proper}}, \mathbf{W}_t^{\text{proper}})$ is a local minimum of $E$. In particular, it is a proper $E_t$-local minimum of $E$ and therefore a proper $E_0$-local minimum of $E$. 

From lemma \ref{differentiable}, we know that the set of proper $E_0$-local minima is bounded. Hence, the set $\{ \mathbf{d}_t^{\text{proper}}, t \ge 0\}$ is bounded, i.e there exists some $\mathbf{d}^{\text{max}}$ with $\mathbf{d}^{\text{max}} \ge \mathbf{d}_t^{\text{proper}}$ for all $t$. Hence, if we denote $\max_l d^{\text{max}}_l$ by $T$, we have $\mathbf{d}_T \ge \mathbf{d}_t^{\text{proper}}$ for all $t$ and therefore $E_T \le E(\mathbf{d}_t^{\text{proper}}, \mathbf{W}_t^{\text{proper}})$. But $E(\mathbf{d}_t^{\text{proper}}, \mathbf{W}_t^{\text{proper}}) = E(\mathbf{d}_t, \mathbf{W}_t) = E_t$, and therefore $E_t \ge E_T$ for all $t$. 

But $(E)_t$ converges to $B$ from above. Therefore $E_T = B$, therefore $E(\mathbf{d}_T, \mathbf{W}_T) = B$ and so $E$ attains its greatest lower bound which means it attains a global minimum, as required.
\end{proof}

\newpage

\subsection{Proof of proposition 1} \label{propProof}

\begin{repproposition}{oneLambdaProp}
If all nonlinearities in a nonparametric network model except possibly $\sigma_L$ are self-similar, then the objective function \ref{npobj} using a fan-in or fan-out regularizer with different regularization parameters $\lambda_1, .., \lambda_L$ for each layer is equivalent to the same objective function using the single regularization parameter $\lambda = (\prod_{l=1}^L\lambda_l)^{\frac{1}{L}}$ for each layer, up to rescaling of weights.
\end{repproposition}

\begin{proof}

Choose arbitrary positive $\lambda_1, .., \lambda_L$ and let $\lambda = (\prod_{l=1}^L\lambda_l)^{\frac{1}{L}}$. We have:

\begin{eqnarray}
&& f(\mathbf{W}, x) \\
&=& \sigma_L.(\sigma_{L-1}.(..\sigma_2.(\sigma_1.(xW_1)W_2)..)W_L) \label{a0501}\\
&=& \sigma_L.(\sigma_{L-1}.(..\sigma_2.(\sigma_1.((\prod_{l=1}^L\frac{\lambda_l}{\lambda})xW_1)W_2)..)W_L) \label{a0502}\\
&=& \sigma_L.(\sigma_{L-1}.(..\sigma_2.((\prod_{l=2}^L\frac{\lambda_l}{\lambda})\sigma_1.(\frac{\lambda_1}{\lambda}xW_1)W_2)..)W_L) \label{a0503}\\
&=& \sigma_L.(\frac{\lambda_L}{\lambda}\sigma_{L-1}.(..\sigma_2.(\frac{\lambda_2}{\lambda}\sigma_1.(\frac{\lambda_1}{\lambda}xW_1)W_2)..)W_L) \label{a0504}\\
&=& \sigma_L.(\sigma_{L-1}.(..\sigma_2.(\sigma_1.(x(\frac{\lambda_1}{\lambda}W_1))(\frac{\lambda_2}{\lambda}W_2))..)(\frac{\lambda_L}{\lambda}W_L)) \label{a0505}
\end{eqnarray}

The line-by-line explanation is as follows:

\begin{itemize}
\item[\ref{a0501}] Insert the definition of $f$.
\item[\ref{a0502}] Insert a multiplicative factor of value 1.
\item[\ref{a0503}] Utilize the self-similarity of $\sigma_1$.
\item[\ref{a0504}] Repeat the previous step $L-2$ times.
\item[\ref{a0505}] Utilize linearity.
\end{itemize}

Further, assuming we use a fan-in regularizer, we have:

\begin{eqnarray}
&&\sum_{l=1}^{L} \lambda_l \sum_{j=1}^{d_l} ||[W_l(i,j)]_i||_p\label{a0601}\\
&=&\sum_{l=1}^{L} \frac{\lambda_l}{\lambda}\lambda \sum_{j=1}^{d_l} ||[W_l(i,j)]_i||_p\label{a0602}\\
&=&\sum_{l=1}^{L} \lambda \sum_{j=1}^{d_l} ||[\frac{\lambda_l}{\lambda}W_l(i,j)]_i||_p\label{a0603}
\end{eqnarray}

The argument is equivalent for the fan-out regularizer. 

We find that the value of the objective is preserved when we replace all regularization parameters with the same value $\lambda = (\prod_{l=1}^L\lambda_l)^{\frac{1}{L}}$ and rescale $W_l$ by $\frac{\lambda_l}{\lambda}$. This completes the proof.

\end{proof}

\subsection{Adarad-M} \label{adaradmsection}

\begin{algorithm}[H]
\CommentSty{\color{blue}}
\SetKwInOut{Input}{input}
\Input{$\alpha_r$: radial step size; $\alpha_\phi$: angular step size; $\lambda$: regularization hyperparameter; $\beta_{\text{arith}}$: arithmetic mixing rate; $\beta_{\text{quad}}$: quadratic mixing rate; $\epsilon$: numerical stabilizer; $\mathbf{d}^0$: initial dimensions; $\mathbf{W}^0$: initial weights; $\nu$: unit addition rate; $\nu_{\text{freq}}$: unit addition frequency; $T$: number of iterations}
$\phi_{\text{max}} = 0$; $c_{\text{max}} = 0$; $\mathbf{d} = \mathbf{d}^0$; $\mathbf{W} = \mathbf{W}^0$\;
\For{$l=1$ \KwTo $L$}
{
	set $\tilde{\phi}_l$ (angular arithmetic running average) to the zero matrix of size $d_{l-1}^0 \times d_l^0$\;
	set $\bar{\phi}_l$ (angular quadratic running average), $c_l$ (quadratic running average capacity) and $a_l$ (arithmetic running average capacity) to zero vectors of size $d_l^0$\; 
}
\For{$t=1$ \KwTo $T$}
{
	set $D^t$ to mini-batch used at iteration $t$\;
	$\mathbf{G} = \frac{1}{|D|}\nabla_\mathbf{W} \sum_{(x,y) \in D^t} e(f(\mathbf{W},x), y)$\;
	\For{$l=L$ \KwTo $1$}
	{
		$alt = \text{FALSE}$\;
		\For{$j=d_l$ \KwTo $1$}
		{			
			decompose $[G_l(i,j)]_i$ into a component parallel to $[W_l(i,j)]_i$ (call it $r$) and a component orthogonal to $[W_l(i,j)]_i$ (call it $\phi$) such that $[G_l(i,j)]_i = r + \phi$\;
			$\bar{\phi}_l(j) = (1-\beta_{\text{quad}})\bar{\phi}_l(j) + \beta_{\text{quad}}||\phi||_2^2$; $c_l(j) = (1-\beta_{\text{quad}})c_l(j) + \beta_{\text{quad}}$\;
			$\phi_{\text{max}} = \max(\phi_{\text{max}}, \bar{\phi}_l(j))$; $c_{\text{max}} = \max(c_{\text{max}}, c_l(j))$ \;
			$[\tilde{\phi}_l(i,j)]_i = (1-\beta_{\text{arith}})[\tilde{\phi}_l(i,j)]_i + \beta_{\text{arith}}\phi$; $a_l(j) = (1-\beta_{\text{arith}})a_l(j) + \beta_{\text{arith}}$\; \label{algm:momentum}
			$\phi_{\text{adj}} = \frac{\sqrt{\frac{\phi_{\text{max}}}{c_{\text{max}}}}}{\sqrt{\frac{\bar{\phi}_l(j)}{c_l(j)}} + \epsilon}\frac{[\tilde{\phi}_l(i,j)]_i}{a_l(j)}$\; \label{algm:normalize}
			$[W_l(i,j)]_i = [W_l(i,j)]_i - \alpha_rr$\; 
			rotate $[W_l(i,j)]_i$ by angle $\alpha_\phi||\phi_{\text{adj}}||_2$ in direction $-\frac{\phi_{\text{adj}}}{||\phi_{\text{adj}}||_2}$\; \label{algm:rotate}
			rotate $[\tilde{\phi}_l(i,j)]_i$ by angle $\alpha_\phi||\phi_{\text{adj}}||_2$ in direction $\frac{[W_l(i,j)]_i}{||[W_l(i,j)]_i ||_2}$\; \label{algm:ortho1}
			$\text{shrink}([W_l(i,j)]_i, \alpha_r\lambda\frac{|D^t|}{|D|})$\;
			\If{$l < L$ and $[W_l(i,j)]_i$ is a zero vector}
			{
				remove column $j$ from $W_l$ and $\tilde{\phi}_l$; remove row $j$ from $W_{l+1}$ and $\tilde{\phi}_{l+1}$; remove element $j$ from $\bar{\phi}_l$, $c_l$ and $a_l$; decrement $d_l$\;
				$alt = \text{TRUE}$\;
			}
		} 
		\If{$t = 0 \mod \nu_{\text{freq}}$}
		{
			$\nu' = \nu$\tcp*{if $\nu \not \in \mathbb{Z}$, we can set e.g. $\nu' = \text{Poisson}(\nu)$}
			add $\nu'$ randomly initialized columns to $W_l$; add $\nu'$ zero columns to $\tilde{\phi}_l$; add $\nu'$ zero rows to $W_{l+1}$ and $\tilde{\phi}_{l+1}$; add $\nu'$ zero elements to $\bar{\phi}_l$, $c_l$ and $a_l$; $d_l = d_l + \nu'$\;
		}
		\If{$alt$}
		{
			\For{$j=1$ \KwTo $d_{l+1}$}
			{
				$[\tilde{\phi}_{l+1}(i,j)]_i = [\tilde{\phi}_{l+1}(i,j)]_i - \frac{[\tilde{\phi}_{l+1}(i,j)]_i . [W_{l+1}(i,j)]_i}{|| [W_{l+1}(i,j)]_i||_2^2} [W_{l+1}(i,j)]_i$\; \label{algm:ortho2}
			}
		}
	}
}
\Return $\mathbf{W}$\;
\caption{AdaRad-M with $\ell_2$ fan-in regularizer and the unit addition / removal scheme used in this paper in its most instructive (bot not fastest) order of computation.\label{adaradmalgo}} 
\end{algorithm}

\newpage

AdaRad-M is shown in algorithm \ref{adaradmalgo}. The main difference in comparison to AdaRad (see algorithm \ref{adaradalgo}) is that, for each fan-in, we maintain an exponential running average of the orthogonal component $[\tilde{\phi}_l(i,j)]_i$ (line \ref{algm:momentum}) which we use to compute the angular shift (line \ref{algm:normalize}). Hence, AdaRad-M, like Adam but unlike RMSprop and AdaRad, makes use of the principle of momentum.

One issue of note is that the running average of the orthogonal component is not itself orthogonal to the current value of the fan-in. Hence, if some multiple of it was added to the fan-in in radial-angular coordinates, it would change the length of the fan-in. This is undesirable as explained in  section \ref{adaradsection}. Therefore, we take steps to the ensure that $[\tilde{\phi}_l(i,j)]_i$ is kept orthogonal to $[W_l(i,j)]_i$. First, whenever we rotate $[W_l(i,j)]_i$ (line \ref{algm:rotate}), we rotate $[\tilde{\phi}_l(i,j)]_i$ in the same manner (line \ref{algm:ortho1}). Second, whenever a unit in layer $l$ and hence rows of $W_{l+1}$ and $\tilde{\phi}_{l+1}$ are deleted, we explicitly re-orthogonalize them (line \ref{algm:ortho2}).

\subsection{Experimental details} \label{experimentaldetails}

\begin{figure}[h]
\adjincludegraphics[width=0.33\textwidth]{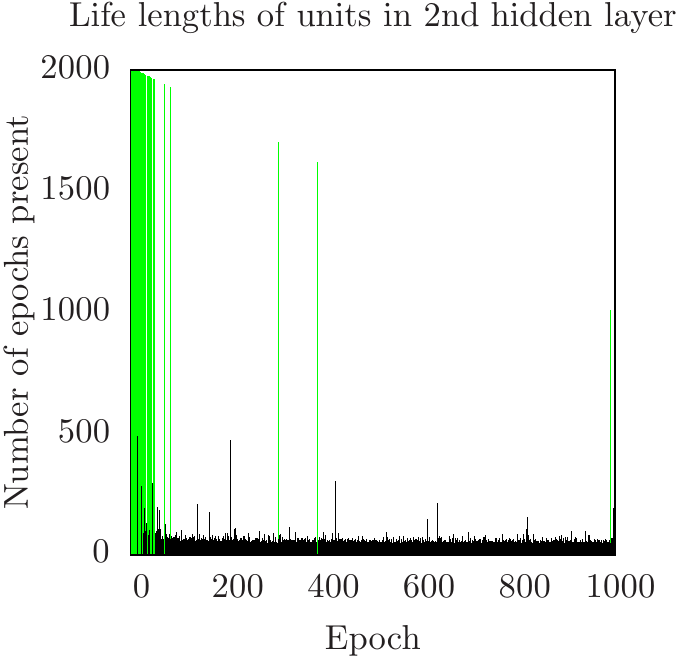}
\caption{Length of time individual units in the second hidden layer were present during training. The x axis depicts the epoch at which a given unit was added.} \label{analysis2}
\end{figure}

\begin{table*}[h]
\centering
{
\footnotesize
\begin{tabular}{p{7cm} p{4cm}}
Hyperaparameter & Value \\ \hline\hline
network architecture & see figure \ref{npnArch}\\ \hline
number of hidden layers (not {\it poker}) & 2 \\ \hline
number of hidden layers ({\it poker}) & 4 \\ \hline
$\alpha_r$: radial step size for AdaRad (not {\it poker}) & $\frac{1}{50\lambda}$ \\\hline
$\alpha_r$: radial step size for AdaRad ({\it poker})& $\frac{1}{5\lambda}$ \\\hline
$\nu$: unit addition rate for AdaRad & 1 \\\hline
$\nu_{\text{freq}}$: unit addition frequency for AdaRad (not {\it poker}) & once per epoch \\\hline
$\nu_{\text{freq}}$: unit addition frequency for AdaRad ({\it poker}) & ten times per epoch \\\hline
$\beta_{\text{arith}}$: arithmetic mixing rate for AdaRad, momentum, Nesterov momentum and Adam & 0.1 \\\hline
$\beta_{\text{quad}}$: quadratic mixing rate for AdaRad, RMSprop and Adam & 0.005 \\\hline
$\epsilon$: numerical stabilizer for AdaRad, RMSprop and Adam & $10^{-8}$ \\\hline
number of starting units for NP networks & 10 per hidden layer \\\hline
$\mathbf{W}^0$: initial weights (P and NP) & $W^0_l(i,j) \sim \mathcal{N}(0,\frac{1}{\sqrt{d^0_{l-1}}})$\\\hline
fan-in $[W_l(i,j)]_i$ for a newly added unit $j$ & $W^0_l(i,j) \sim \mathcal{N}(0,\frac{1}{\sqrt{d_{l-1}}})$\\\hline
batch size & 1000\\\hline
batch sampling & every epoch, batches are sampled without replacement \\\hline
type of validation (not {\it poker}) & one random train-valid split for each random seed\\\hline
type of validation ({\it poker}) & one single random train-valid-test split for all training runs \\\hline
train-valid split ({\it MNIST}) & 50.000 - 10.000 \\\hline
train-valid split ({\it rectangles images}) & 10.000 - 2.000 \\\hline train-valid split ({\it convex}) & 7.000 - 1.000 \\ \hline
train-valid-test split ({\it poker}) & 800.000 - 125.010 - 100.000 \\
\end{tabular}
}
\caption{Hyperparameters and related choices. \label{hyperparameterChoices}}
\end{table*}

In table \ref{hyperparameterChoices}, we show all hyperparameter values and related choices that were universal across all training runs and, unless specified otherwise, datasets.

\subsubsection{Protocol for section \ref{performanceSection}}

\begin{enumerate}
\item We conducted a grid search over $\lambda \in \{ 10^{-2}, 3*10^{-3}, 10^{-3}, 3*10^{-4}, 10^{-4}, 3*10^{-5}, 10^{-5}, 3*10^{-6}, 10^{-6}, 3*10^{-7}, 10^{-7}, 3*10^{-8}, 10^{-8}\}$ and $\alpha_\phi \in \{1, 3, 10, 30, 100, 300, 1.000, 3.000, 10.000, 30.000, 100.000\}$ for nonparametric (NP) networks using AdaRad and a single random seed, for each of the {\it mnist}, {\it rectangles-images} and {\it convex} datasets. By examining validation classification error (VCE) and other metrics (but not test error), we chose the single value $\alpha_\phi = 30$ for all NP experiments from now on. Further, we chose a few interesting values of $\lambda$ for each dataset. From now on, all experiments were conducted independently for each dataset. \label{step1}
\item We trained 10 NP networks for each chosen value of $\lambda$, with 10 different random seeds. Out of the 10 nets produced, we manually chose a single net as a typical representative by approximating the median of both network size, measured in number of weight parameters, and the test classification error (TCE) across the 10 runs. This representative, as well as the range of sizes and TCEs are shown in black in figure \ref{performance}. \label{step2}
\item For each chosen representative, we conducted a grid search for parametric (P) networks by fixing the size of the net to the size of the representative. The grid was over $\alpha \in \{1, 3, 10, 30, 100, 300, 1.000, 3.000, 10.000, 30.000, 100.000\}$, over training algorithm (one of SGD, momentum, Nesterov momentum, RMSprop, Adam), and over whether batch normalization layers had free trainable mean and variance parameters. We introduced the last choice to more closely mimic CapNorm, which does not include free parameters. We set $\lambda = 0$ as $\ell_2$ regularization is not compatible with regular (uncapped) batch normalization. In preliminary experiments, networks trained with $\ell_2$ regularization and no batch normalization were not competitive. We used the same random seed as in step \ref{step1}.
\item We chose the 10 best performing settings from the grid search by VCE and produced 10 reruns for each setting using the same 10 random seeds as in step \ref{step2}. Then we chose the best setting out of the 10 by median VCE. We depict the median as well as the range of TCE for that best setting in red in figure \ref{performance}. Note that the setting that had the lowest median TCE in all cases also had the lowest median VCE.
\item We conducted a random search for P networks with 500 random settings. We chose $\alpha$ uniformly from the interval $[1, 100.000]$ in log scale. Training algorithm and type of batch normalization were chosen uniformly at random from the same sets as in step 3. The size of each hidden layer was chosen uniformly at random between the size of the corresponding layer in the largest NP representative, and 5 times that size. We used the same random seed as in step \ref{step1}.
\item We chose the 10 best settings by VCE and reran them 10 times, using the same 10 random seeds as in step \ref{step2}. By considering network size and median VCE, we chose 2 or 3 settings to display in blue in figure \ref{performance}, including the setting with the lowest median VCE. In each case, the setting with the lowest median VCE also had the lowest median TCE.
\end{enumerate}

For NP networks, we trained until the VCE had not improved for 100 epochs. Then, we rewound the last 100 epochs and kept training without adding units. After no units had been eliminated and the VCE had not improved for 100 epochs, we set $\lambda$ to zero, rewound the last 100 epochs and kept training. After the VCE had not improved for 100 epochs, we rewound again and divided the angular step size by 3. After the VCE had not improved for 5 epochs, we rewound and divided the angular step size by 3 again. We kept doing this until the angular step size was too small to change the VCE.

For P networks, we trained until the VCE had not improved for 100 epochs, then rewound and divided the step size by 3. We kept training until the VCE had not improved for 5 epochs, then rewound again and divided the step size by 3. We kept doing this until the step size was too small to change the VCE. 

\subsubsection{Protocol for section \ref{scalabilitySection}}

\begin{enumerate}
\item We conducted a grid search over $\lambda \in \{ 10^{-3}, 3*10^{-4}, 10^{-4}, 3*10^{-5}, 10^{-5}, 3*10^{-6}, 10^{-6}, 3*10^{-7}, 10^{-7} \}$ and $\alpha_\phi \in \{1, 10, 100, 1.000, 10.000\}$ for NP networks using AdaRad, a single random seed and the {\it poker} data set. By examining VCE and other metrics (but not test error), we chose the single value $\alpha_\phi = 10$. For this value, we chose several values of $\lambda$. The size and TCE of the nets trained using those values of $\lambda$ are shown in table \ref{scaleTable}.
\item For each trained NP network shown in table \ref{scaleTable}, we trained P networks of the same size using RMSprop and each of the following step sizes: $\alpha \in \{1, 3, 10, 30, 100, 300, 1.000, 3.000, 10.000\}$. For each network size, the TCE of the network with the lowest VCE is shown in table \ref{scaleTable}. For all network sizes, the network with the lowest TCE also had the lowest VCE.
\end{enumerate}

For NP networks, we trained until the VCE had not improved for 10 epochs. Then, we rewound the last 10 epochs and kept training without adding units. After no units had been eliminated and the VCE had not improved for 10 epochs, we set $\lambda$ to zero, rewound the last 10 epochs and kept training. After the VCE had not improved for 10 epochs, we rewound again and divided the angular step size by 3. After the VCE had not improved for 0.5 epochs, we rewound and divided the angular step size by 3 again. We kept doing this until the angular step size was too small to change the VCE.

For P networks, we trained until the VCE had not improved for 10 epochs, then rewound and divided the step size by 3. We kept training until the VCE had not improved for 0.5 epochs, then rewound again and divided the step size by 3. We kept doing this until the step size was too small to change the VCE.

\end{document}